\documentclass{article}

\usepackage{tikz}
\usetikzlibrary{arrows.meta, positioning}
\usepackage{tikz-cd}

\usepackage[most]{tcolorbox}
\usepackage{xcolor}

\usepackage{tabularx}

\newcommand{\dominik}[1]{\textcolor{orange}{\textbf{Dominik: #1}}}
\newcommand{\Ana}[1]{\textcolor{blue}{\textbf{Ana:  #1}}}
\newcommand{\bB}{\mathbf{B}}
\newcommand{\bb}{\mathbf{b}}
\newcommand{\one}{\mathbf{1}}
\newcommand{\bX}{\mathbf{X}}
\newcommand{\bx}{\mathbf{x}}
\newcommand{\cP}{\mathcal{P}}
\newcommand{\cG}{\mathcal{G}}

\newcommand{\cC}{{\cal C}}
\newcommand{\cE}{{\cal E}}

\newcommand\independent{\protect\mathpalette{\protect\independenT}{\perp}}
\def\independenT#1#2{\mathrel{\rlap{$#1#2$}\mkern2mu{#1#2}}}

\usepackage{microtype}
\usepackage{graphicx}
\usepackage{subcaption}
\usepackage{booktabs} 

\usepackage{hyperref}

\usepackage[accepted]{icml2026}

\usepackage{amsmath}
\usepackage{amssymb}
\usepackage{mathtools}
\usepackage{amsthm}

\usepackage[capitalize,noabbrev]{cleveref}

\theoremstyle{plain}
\newtheorem{theorem}{Theorem}[section]

\newtheorem{lemma}[theorem]{Lemma}

\theoremstyle{definition}
\newtheorem{definition}[theorem]{Definition}

\newtheorem{example}[theorem]{Example}
\theoremstyle{remark}

\usepackage[textsize=tiny]{todonotes}

\icmltitlerunning{Formalizing and Falsifying Causal Pathways of Rare Events}

\begin{document}

\twocolumn[
  \icmltitle{Formalizing and Falsifying Causal Pathways of Rare Events}



  \icmlsetsymbol{equal}{*}

  \begin{icmlauthorlist}
    \icmlauthor{Anahita Haghighat}{yyy}
    \icmlauthor{Dominik Janzing}{comp}
  \end{icmlauthorlist}

  \icmlaffiliation{yyy}{Independent}
  \icmlaffiliation{comp}{Amazon Research, T\"ubingen, Germany}

  \icmlcorrespondingauthor{
  Dominik Janzing }{janzind@amazon.com}

  \icmlkeywords{Machine Learning, ICML}

  \vskip 0.3in
]



\printAffiliationsAndNotice{}  

\begin{abstract}
Building on recent formalizations of root cause analysis for rare events (“outliers”) in structural equation models, we propose a formal definition of a causal pathway and discuss its testable implications. We identify conditions under which these implications depend only on a causal abstraction defined by the pathway of rare events, rather than on the full causal graph of the underlying system. Accordingly, we introduce an abstraction of causal structure to pathways of rare events that bridges simple verbal causal explanations and detailed causal modeling.
\end{abstract}



\section{Introduction}

Causal explanations \cite{halpern2005causes} are of particular interest when an observed event
is unexpected under a given probabilistic model. 
Typical examples of rare events include natural disasters \cite{hannart2016causal}, stock market crashes \cite{sanyal2025timetime}, failures in technical 
systems \cite{pham2024root}, and extreme gene expression patterns \cite{li2025root}. Current approaches to root cause analysis of rare events are typically centered on identifying a small set of causal origins, called \textit{root causes} \cite{orchard2025root, nagalapatti2025robust, lin2024root, li2022causal, Budhathoki2022, gnecco2021causal, liu2021microhecl, lin2018microscope}. 
While root cause analysis is informative, it does not capture the mechanisms through which these causes propagate to produce the observed outcome, which may include interacting mechanisms and modulated contextual factors. As a result, causal explanations  cannot, in general, be reduced to identifying a limited number of root causes. For purposes of interpretability and scientific insight, it is therefore desirable to provide causal explanations that identify not only root causes, but also the relevant subgraph of the causal model that connects these causes to the observed event.
Furthermore, the full causal model is often not known \cite{ikram2025root}, therefore causal discovery can also be viewed as the problem of identifying a minimal causal structure that strongly explains a particular observed event \cite{beckers2022causal}. This perspective highlights the need for a formal notion of event-specific causal explanation that remains well-defined under partial model knowledge and uncertainty about the underlying detailed causal graph. 


\paragraph{Our contributions.}

We introduce a formalization of {\it causal pathways} of  events connecting root causes to a target. Based on this framework, we define several quantities that measure the quality of an explanation, motivated by the ideas of {\it plausibility}
and {\it falsifiability}. 
Although most of our events refer to numeric variables, we emphasize that our concepts generalize to variables with arbitrary ranges, e.g., variables may describe tokens or sentences when a probability distribution is defined via semantic embedding.



\paragraph{Related work.}
Existing work on causal modeling under rare events has largely focused on {\bf extreme-value regimes} with asymptotic or tail-based limits, often subject to parametric assumptions
\cite{engelke2025extremes, kluppelberg2026causal}. 
While this line of work provides insight into causal mechanisms in the tail of continuous variables, it does not address how causal structure behaves under \textit{rare but non-extreme} events as rarity does not imply extremeness: events may be statistically rare while occurring at non-tail values 
\cite{Ebtekar2025}, such as values that are unexpectedly close to zero or values of highly unbalanced binaries. Consequently, approaches based on asymptotic extremes are not directly applicable in this setting. In contrast, we study causal contributions for rare events without relying on asymptotic limits. Our framework does not assume continuous variables and applies to variables defined over arbitrary spaces, such as discrete domains. In addition, related work on {\bf mediation analysis} has primarily addressed decomposing causal effects into path-specific components in order to quantify how effects propagate along different causal pathways \cite{robins2022interventionist, kawakami2025decomposition, singal2024axiomatic}. While this literature enables attribution of portions of a causal effect to specific paths, it does not address a complementary question: which parts of a causal model provide a good causal explanation for a particular observed event. Therefore, our goal is not mediation analysis, but rather event-level interpretability of causal structure for rare events. Another related line of work is {\bf causal abstraction}, which provides a formal framework for relating causal models that represent the same system at different levels of granularity \cite{Rubensteinetal17, beckers2019abstracting, beckers2020approximate}. However, existing approaches to causal abstraction operate at the level of the entire causal model, whereas our paper formalizes an event-level notion of causal pathway abstraction for the portion of a causal graph that explains a target rare event.  

The paper is structured as follows. To quantify the extent to which a set of root-cause events explains a set of observed events,  Section \ref{sec:cluster} discusses
causal relations between binary variables. 
Section \ref{sec:pathway}
defines {\it causal pathways} from root causes to a target event and quantifies 
to what extent the pathway explains the target. 
Section \ref{sec:abstractions}
describes how 
causal pathways of events, represented as binary variables, are obtained via abstraction of more detailed causal models. 
Section \ref{sec:falsification} demonstrates how our concepts can be applied to evaluate causal explanations.

\section{Explaining Clusters of Events \label{sec:cluster}}

Assume we are given $k$ events, formalized by binary variables $\bB:=\{B_1,\dots,B_k\}$, where $B_i=1$ denotes event $i$ happening. 
Further assume that the variables $\bB$ are connected by a causal DAG $\cC$ with joint probability distribution $P_\bB$. For an arbitrary index set $S\subseteq K:= \{1, \dots,k\}$,  let $\bB_S$ denote the vector of variables $(B_i)_{i\in S}$, and let $\bB_S=\one$ denote the joint event $(B_i=1)_{i\in S}$. Then, assuming the Markov condition \cite{Pearl:00}, the probability of observing the event $P_\bB(\bB=\one)$ factorizes as 
\begin{equation}
P_\bB(\bB=\one) = \prod_{i=1}^k P_\bB(B_i=1 \mid \bB_{\mathrm{Pa}(i)} =\one),
\end{equation}
where $\mathrm{Pa}(i)$ denotes the indices of the parents of $B_i$ in $\cC$.
Now consider the scenario that 
an external event ``corrupted'' the data generating process in the sense that one or several of the causal mechanisms 
$P_\bB(B_i \mid \bB_{\mathrm{Pa}(i)})$ are replaced with different 
mechanisms $\tilde{P}_\bB(B_i \mid \bB_{\mathrm{Pa}(i)})$. Let $R\subseteq K$ denote the set of indices of the affected nodes. We define the variables $\bB_R$ as the root-cause variables of the event $\bB= \one$, and call the tuple $(\cC, \bB, \bB_R, P_\bB)$ a \emph{cluster of events} with root causes $R$. 
The ``hard intervention''  \cite{Pearl:00}
$do(\bB_R=\one)$ is a special case of the more general class of ``soft interventions'' \cite{Eberhardt2007}, which yields
\begin{equation}\label{eq:clusterInter}
P_\bB(\bB=\one | do(\bB_R =\one )) = \prod_{i\not \in R} P_\bB(B_i=1| \bB_{\mathrm{Pa}(i)} =\one).
\end{equation}

For the case of soft interventions, the right-hand side of \eqref{eq:clusterInter} is only an upper bound on the likelihood of the cluster event $\bB = \one$, because the term
$\prod_{i\in R} \tilde{P}_\bB(B_i=1 \mid\bB_{\mathrm{Pa}(i)}=\one)$ is missing. 
Whenever the right-hand side of (\ref{eq:clusterInter}) 
is close to $1$, the intervention
$do(\bB_R=\one)$ is a {\it plausible explanation} for the cluster event. To formalize this idea, we first introduce 
a concept from \citet{Oesterle2025} with slightly different notation:
\begin{definition}[explanation score]
Let $Y$ be a discrete variable and $\bX$ a vector of variables influencing $Y$. For values $x$ and $y$, the event $\bX=\bx$ explains 
the event $Y=y$ with explanation score
\begin{align}
{\cal E}(\bx \to y) := 
1- \frac{\log P_\bB(Y=y \mid do(\bX=\bx))}{\log P_\bB(Y=y)}.
\end{align}
We call the quantity $1- {\cal E}(\bx \to y)$ the \emph{explanation deficit}.
\end{definition}
Following this terminology 
we define: 
\begin{definition}[cluster explanation score]\label{def:clusterScore}
For any set $R\subseteq K$ of potential root causes, the event $\bB_R=\one$ explains the cluster event $\bB=\one$
with explanation score
\begin{eqnarray}
\cE_{R \to K} :&=& 1 - \frac{\log P_\bB(\bB=\one \mid do(\bB_R =\one))}{\log P_\bB(\bB=\one)} \nonumber\\
&=& \frac{\sum_{i\in R}\log P_\bB(B_i=1 \mid \bB_{\mathrm{Pa}(i)}=\one)}{\log P_\bB(\bB=\one)}. \nonumber\\
\end{eqnarray}
\end{definition}
Good candidates for root causes are therefore given by sets $R$ with high explanation score. Furthermore, while \citet{Oesterle2025}
emphasize that explanation score is--for good reasons--not generally additive over disjoint unions of features,
one can easily check that 
$\cE_{R \to K}$ does have this property: 
\begin{lemma}[additivity of contributions]
\label{lem:additivity}
Let $R$ and $R'$ be disjoint subsets of $K$. Then 
\begin{align}
\cE_{R \stackrel{\cdot}{\cup} R' \to K} =
\cE_{R\to K} + \cE_{R'\to K}.
\end{align}

Denoting $\cE_{i\to K}$ instead of 
$\cE_{\{i\}\to K}$,
one can also verify that 
\begin{equation}\label{eq:aditive}
\sum_{i\in K} \cE_{i \to K} = 1.
\end{equation}
\end{lemma}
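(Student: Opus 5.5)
Both claims follow from the closed form in Definition~\ref{def:clusterScore}. It writes the score as a sum over the root-cause indices divided by a normalizing constant that does not depend on $R$. The plan is therefore to reduce everything to linearity of a finite sum.

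\textbf{Step 1: rewrite the score.} For every $R\subseteq K$ we have
\[
\cE_{R\to K}=\frac{1}{\log P_\bB(\bB=\one)}\sum_{i\in R} a_i,
\qquad a_i:=\log P_\bB(B_i=1\mid \bB_{\mathrm{Pa}(i)}=\one).
\]
To justify the second line of Definition~\ref{def:clusterScore}, take the logarithm of the Markov factorization of $P_\bB(\bB=\one)$. This gives $\log P_\bB(\bB=\one)=\sum_{i\in K} a_i$. Taking the logarithm of \eqref{eq:clusterInter} gives $\log P_\bB(\bB=\one\mid do(\bB_R=\one))=\sum_{i\notin R} a_i$.

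Writing $L:=\sum_{i\in K}a_i$, we get
\[
1-\frac{\sum_{i\notin R}a_i}{L}=\frac{L-\sum_{i\notin R}a_i}{L}=\frac{\sum_{i\in R}a_i}{L}.
\]
Throughout, I will assume the non-degeneracy conditions implicit in the definition. These are that all the conditional probabilities are positive, so that each $a_i$ is finite, and that $P_\bB(\bB=\one)<1$, so that $L\neq 0$ and the ratio is defined.

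\textbf{Step 2: additivity.} If $R\cap R'=\emptyset$, then
\[
\sum_{i\in R\,\dot\cup\, R'}a_i=\sum_{i\in R}a_i+\sum_{i\in R'}a_i.
\]
Dividing by the common constant $L$ gives $\cE_{R\,\dot\cup\, R'\to K}=\cE_{R\to K}+\cE_{R'\to K}$.

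\textbf{Step 3: the identity \eqref{eq:aditive}.} Apply the additivity of Step 2 repeatedly, by induction on the singletons $\{1\},\dots,\{k\}$, which partition $K$. This gives $\sum_{i\in K}\cE_{i\to K}=\cE_{K\to K}$. Alternatively, read this off directly from Step 1. Finally, $\cE_{K\to K}=\sum_{i\in K}a_i/L=1$.

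\textbf{Main obstacle.} The only real content is Step 1, specifically that intervening on $\bB_R$ exactly deletes the factors indexed by $R$ and leaves the other factors unchanged. This is the truncated factorization \eqref{eq:clusterInter}, which the paper has already stated under the Markov condition, so I will invoke it directly.

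The one care point is the degenerate case $P_\bB(\bB=\one)=1$, where $L=0$ and the score is undefined. I will exclude this case explicitly, consistent with the setting of rare events.
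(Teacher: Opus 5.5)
Your proof is correct and follows the paper's (implicit, ``one can easily check'') argument: the closed form in Definition~\ref{def:clusterScore} writes $\cE_{R\to K}$ as $\sum_{i\in R}\log P_\bB(B_i=1\mid\bB_{\mathrm{Pa}(i)}=\one)$ divided by the $R$-independent constant $\log P_\bB(\bB=\one)$. Additivity over disjoint unions and $\sum_{i\in K}\cE_{i\to K}=\cE_{K\to K}=1$ then follow from linearity of the sum, and your explicit exclusion of the degenerate cases (zero conditional probabilities, $P_\bB(\bB=\one)=1$) is a sensible precision the paper leaves unstated.
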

We call $\cE_{i\to K}$
the \emph{contribution} of mechanism $P_\bB(B_i \mid\bB_{\mathrm{Pa}(i)})$ to the cluster event. 
By the additivity stated in Lemma \ref{lem:additivity}, for every $R \subseteq K \setminus \{i\}$,
we also have $\cE_{i\to K} =\cE_{R\cup \{i\} \to K}
- \cE_{R \to K}$.
Hence, the contribution of each
mechanism $i$ 
to the cluster explanation
does not
depend on which other root causes are  considered simultaneously. 
Consequently, there is no need to average over contexts 
of variables to condition on
as in
Shapley-value ``fair attribution'' \cite{shapley:book1952}, in contrast to causal and statistical attribution settings where marginal contributions may depend on the chosen context \cite{Lundberg2017,Frye2020, Shapley_Flow, janzing2020feature}. In addition, note that for any discrete distribution $Q$, $-\log Q(x)$ can be written as the KL-divergence $D_{KL}(\delta_x\|Q)$, where $\delta_x$ is the point mass on $x$, which enables rewriting $\cE_{R\to K}$ in terms of KL-divergences\footnote{Note that \citep{Oesterle2025} already introduced the explanation score as a KL divergence, as well as generalizations in terms of other distances between distributions.}.
Using this observation, Appendix \ref{subsec:distrChange} shows that 
$\cE_{R\to K}$ can be seen as a degenerate instance of 
distribution change attribution in the sense of \citet{budhathoki2021did}. 

Although we use the term "event" throughout the paper, which may suggest occurrences at specific {\it time instants}, our framework more generally applies to explaining why a specific {\it statistical unit} behaves differently from a reference population, for instance  an individual patient with a monogenic disorder may be viewed as an outlier relative to healthy individuals \cite{li2025root}.

\section{Causal Pathways \label{sec:pathway}}


We now focus on explaining a certain target event $B_t=1$ within a cluster, and therefore focus on the subset of nodes that are crucial to understand how the target event has been triggered by the root causes {\it via a causal pathway}:




\begin{definition}[pathway explanation of events]\label{def:pathway}
A causal pathway 
is a tuple
$(\cC,\cP, \bB, B_t, \bB_R, P_{\bB})$
where \\
(i) $\cC$ is 
a DAG describing the causal relations between the binary variables 
 $\bB :=\{B_1,\dots,B_k\}$, 
and the probability distribution $P_\bB$ is therefore Markov relative to $\cC$, 
 \\
(iii) $\bB_R \subseteq \bB$ is the set of root causes, where $\bB_R$ can also be empty or only contain $B_t$, \\
(iv) 
 $\cP$ is a subgraph of $\cC$, the {\it pathway} which coincides with $\cC$ except possibly for some edges into $R$, \\
 (ii)
 $B_t\in \bB$ is the unique sink node of $\cP$
formalizing the event to be explained.\\
\end{definition}
To understand the distinction between $\cP$ and $\cC$, observe that knowledge of $\cP$ alone suffices to compute $P_\bB(\bB|do(\bB_R=\bb_R))$ for all $\bb_R$, since the do-intervention removes all incoming edges to $R$. Thus, $\cP$ is sufficient for explaining why the identified root causes render the target likely. However, computing the outcome when some root causes remain unintervened upon—as distinct from setting them to zero—requires knowledge of the incoming edges to $R$. In this regard, $\cC$ enables assessment of each individual root cause's relevance and enhances the transparency of the explanation by allowing evaluation of whether the selected root causes are well-justified. In essence, $\cC$ provides a meta-level component to the explanation by elucidating the causal pathway itself. Nevertheless, one can readily verify that the specific choice of $\cC$ does not affect the scores $\cE^K_{R\to t}$ and $\cE_{R\to t}$, which are defined later in this section.

Note, however, that $\cP$ may also contain edges into nodes in $R$. This allows us to describe, for instance, scenarios like {\it chains of root causes}, in which each one ``contributes'' to the target because it does {\it not} propagate the perturbation as expected and adds an additional perturbation (or 
additional ``unexpectedness'')
instead, see Example \ref{ex:chain}.

In the rest of this paper, we will refer to $\mathcal{P}$ as the causal pathway, when the other ``ingredients'' are clear from the context. 
To quantify to what extent 
the root causes explain the target event we again use the concept of explanation score:

\begin{definition}[target explanation score]  
\label{def:Rt}
\begin{align}
\cE_{S\to t}:= 
1 - \frac{\log P_\bB(B_t=1 \mid do(\bB_S=\one))}{\log P_\bB(B_t=1)},
\end{align}
where $S$ can be any subset of nodes.
\end{definition}
While Definition \ref{def:Rt} can be used to measure 
relevance to the target for any set of nodes, we will mostly use $\cE_{R\to t}$ to measure the relevance of the root causes. However, $\cE_{R\to t}$
does not capture the extent to which the remaining events in $K \setminus (R \cup\{t\})$ are crucial for explaining the target event--in contrast with common intuitive notions of a
causal pathway,  which entail some relevance of all events mentioned in a verbal explanation.  This motivates the following concept, which is 
a modification of the explanation score:
\begin{definition}[pathway explanation score]
\label{def:pexscore}
The pathway is said to explain the target with pathway explanation score
\begin{eqnarray}
\cE^K_{R\to t} &:=&
1 - \frac{\log P_\bB(\bB =\one  \mid do(\bB_R=\one))  }{\log P_\bB(B_t=1)} \nonumber\\
&=& 
1 - \frac{\log P_\bB(\bB_{K \setminus R} =\one \mid do(\bB_R=\one))  }{\log P_\bB(B_t=1)}. \nonumber \\
\end{eqnarray}  
\end{definition}

Definition \ref{def:pexscore}
formalizes the idea that 
making the root cause events happen
renders all the events  ``relatively likely'', where likelihood is assessed relative to the ``unlikeliness'' of the target. 
Note that $\cE^K_{R\to t}$
is negative if the likelihood of $\bB=\one$, given the root causes $\bB_R=\one$, is smaller than
the a priori likelihood 
of $B_t=1$--in other words the explanation 
``explains'' the target event with an even more unlikely cluster of events. 
The following result shows the difference between Definitions \ref{def:pexscore} and \ref{def:Rt}: 

\begin{lemma}[pathway score versus target score]
\label{lem:RKtversusRt}
If $S$ denotes the complement of $R \cup \{t\}$, we have 
\begin{align}
&\cE_{R\to t} - \nonumber
\cE^K_{R\to t} \\
&= 
\frac{\log P_\bB(\bB_S=\one \mid B_t=1, do(\bB_R=\one))}{\log P_\bB(B_t=1)} \geq 0. \label{eq:compareScores}
\end{align}
\end{lemma}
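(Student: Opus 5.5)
The plan is to expand both scores from their definitions and apply the chain rule to the interventional distribution $P_\bB(\cdot \mid do(\bB_R=\one))$. Since $K\setminus R = S \,\dot\cup\, \{t\}$ (assuming $t\notin R$; if $t\in R$ the argument below degenerates and is handled separately), the chain rule gives
\begin{align}
&\log P_\bB(\bB_{K\setminus R}=\one \mid do(\bB_R=\one)) \nonumber\\
&= \log P_\bB(B_t=1 \mid do(\bB_R=\one)) \nonumber\\
&\quad + \log P_\bB(\bB_S=\one \mid B_t=1, do(\bB_R=\one)). \nonumber
\end{align}
Here $P_\bB(\bB_S=\one \mid B_t=1, do(\bB_R=\one))$ means conditioning the post-interventional distribution on $B_t=1$. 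I will take this as the intended reading and assume the conditioning event has positive probability.

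Next I substitute this into the second expression of Definition \ref{def:pexscore}, which gives $\cE^K_{R\to t}$. I subtract the result from Definition \ref{def:Rt} with $S$ replaced by $R$. The constant terms $1$ cancel. The terms $-\log P_\bB(B_t=1\mid do(\bB_R=\one))/\log P_\bB(B_t=1)$ also cancel. What remains is exactly
\begin{align}
\frac{\log P_\bB(\bB_S=\one \mid B_t=1, do(\bB_R=\one))}{\log P_\bB(B_t=1)}, \nonumber
\end{align}
which is the equality in \eqref{eq:compareScores}.

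For the inequality, the numerator is the logarithm of a probability, so it is $\le 0$. The denominator $\log P_\bB(B_t=1)$ is $<0$ whenever $B_t=1$ is not a sure event, which is the only case in which the scores are defined. A ratio of two nonpositive quantities with nonzero denominator is $\geq 0$.

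The only real obstacle is bookkeeping of the edge cases, not the algebra:
\begin{itemize}
\item If $t\in R$, the target is trivially explained and $K\setminus R = S$. Then $\log P_\bB(B_t=1\mid do(\bB_R=\one))=0$, and conditioning on $B_t=1$ is vacuous under the intervention, so the same identity holds.
\item If $P_\bB(B_t=1)=1$, the scores are undefined and this case is excluded by convention.
\end{itemize}
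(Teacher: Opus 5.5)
Your proof is correct and is the intended argument. The paper states this lemma without a written proof, treating it as a direct computation. Your chain-rule factorization of $P_\bB(\bB_{K\setminus R}=\one \mid do(\bB_R=\one))$ into the target factor and the conditional factor for $\bB_S$, followed by the sign argument (log-probability $\le 0$ divided by $\log P_\bB(B_t=1)<0$), is exactly that computation, and your edge cases are handled correctly.
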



It is important to note that $\bB_S$ need not
necessarily be mediators 
that propagate the information from $\bB_R$ to $B_t$. We will later also have cases where they describe ``context'' events
that control the propagation of information from $\bB_R$ to $B_t$ as in Example \ref{ex:3Gaussians}.
\eqref{eq:compareScores}
measures to what extent 
configurations other than
$\bB_S=\one$ are relevant when the intervention $do(\bB_R=\one)$ changes the probability of $B_t=1$. For a ``good'' pathway, we expect 
\eqref{eq:compareScores} to be small, because
all its events should occur with high probability 
when the root cause variables are set to $1$ and the target event is observed to happen. 
Lemma \ref{lem:RKtversusRt}
entails in particular that 
$\cE^K_{R\to t}>0$ implies
$\cE_{R\to t}>0$, and thus guarantees that root causes 
with positive pathway explanation score 
are indeed relevant for the {\it target}, and not only for the other nodes in the pathway. 

Pathway explanation  score and cluster explanation score are related as follows:
\begin{lemma}[pathway score versus cluster score]
\label{lem:pathwayVersusCluster}
\begin{align}
1 - \cE_{R\to t}^K =
(1- \cE_{R\to K}) \cdot\frac{\log P_\bB(\bB=\one)}{\log P_\bB(B_t=1)}.
\end{align}
\end{lemma}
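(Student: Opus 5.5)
The identity follows directly from the definitions, because both scores are built on the same numerator $\log P_\bB(\bB=\one \mid do(\bB_R=\one))$ and differ only in the normalizing denominator. I will rewrite each side in terms of this common quantity and compare.

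First, from Definition~\ref{def:pexscore},
\begin{align}
1-\cE^K_{R\to t} = \frac{\log P_\bB(\bB=\one \mid do(\bB_R=\one))}{\log P_\bB(B_t=1)}.
\end{align}
Second, from the first line of Definition~\ref{def:clusterScore},
\begin{align}
1-\cE_{R\to K} = \frac{\log P_\bB(\bB=\one \mid do(\bB_R=\one))}{\log P_\bB(\bB=\one)}.
\end{align}
Multiplying the second expression by $\log P_\bB(\bB=\one)/\log P_\bB(B_t=1)$ cancels the factor $\log P_\bB(\bB=\one)$ and gives exactly the first. This proves the claim; no use of the Markov factorization is needed beyond the fact that both definitions refer to the same interventional probability.

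The only delicate point is that the logarithms in the denominators must be nonzero, so that the definitions make sense and the cancellation is legitimate. This holds if $0<P_\bB(\bB=\one)<1$ and $0<P_\bB(B_t=1)<1$, which is the implicit standing assumption for rare events throughout the paper. I will state this assumption explicitly. The degenerate cases, in which a probability equals $1$, are excluded because the scores are then undefined anyway.
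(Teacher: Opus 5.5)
Your proof is correct and takes the same approach as the paper. The paper gives no separate proof, because the lemma is just the cancellation of the common numerator $\log P_\bB(\bB=\one \mid do(\bB_R=\one))$ between Definitions~\ref{def:clusterScore} and~\ref{def:pexscore}. Your non-degeneracy assumption is a harmless addition; in fact $P_\bB(\bB=\one)>0$ and $P_\bB(B_t=1)<1$ already suffice, since $P_\bB(\bB=\one)\le P_\bB(B_t=1)$.
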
 
Lemma \ref{lem:pathwayVersusCluster} shows an affine relation between the two types of scores, which implies the modified additivity 
\begin{equation}\label{eq:modAddivity}
\cE^K_{R \to t} - 
\cE^K_{\emptyset \to t} =
\sum_{i\in R} (\cE^K_{\{i\}\to t} - \cE^K_{\emptyset \to t}).
\end{equation}
Lemma \ref{lem:pathwayVersusCluster} also 
shows that 
the pathway explanation score 
defines a stricter assessment of an explanation compared to the cluster explanation score: 
an explanation is considered good only if
$P_\bB(\bB=\one \mid do(\bB_R=\one))$ is much
closer to $1$ than the baseline probability $P_\bB(B_t=1)$,
while 
the cluster explanation score is close to 1 if $P_\bB(\bB=\one \mid  do(\bB_R=\one))$ is much closer to $1$ than the typically smaller baseline probability $P_\bB(\bB=\one)$. 

Example \ref{Cluster vs pathway score} 
further illustrates the distinction between pathway and cluster explanation scores.

\begin{figure}
\begin{center}
\begin{tikzpicture}[     scale=.5, 
transform shape,
  >=Stealth,
  node distance=2.2cm and 2.8cm, 
  every node/.style={circle, draw, thick, minimum size=1cm, font=\sffamily\bfseries}
]

  \node (X) {$B_1$};
  \node (Y) [right=of X] {$B_2$};
  \node (Z) [right=of Y] {$B_3$};
  \node (W) [right=of Z] {$B_4$};
 
  \draw[->] (X) -- (Y);
  \draw[->] (Y) -- (Z);
  \draw[->] (Z) -- (W);

\end{tikzpicture}
\end{center}
\caption{\label{fig:chain} Example showing 
that whether a node should be considered 
a root cause does not depend on its position in the DAG: to show this, the text describes parameters for which 
$B_1$ and $B_3$ should be considered root causes, while $B_4$ is only a root cause if we target a higher pathway explanation score.}
\end{figure}

\begin{example}[chain of events] \label{ex:chain}
Let $\mathcal{P}$  be the DAG in Figure \ref{fig:chain}, where each of the variables $B_2,B_3,B_4$ can attain value $1$ only if its parent attains value $1$. Here and later, for a binary variable $B_j$, we write $b_j^a$ as shorthand for the event $B_j=a$, where $a \in\{0,1\}$. 
For the probability mass function $p_\bB$, we assume 
$p_\bB(b_1^1)=10^{-3}$,
$p_\bB(b_2^1 \mid b_1^1)=1$, $p_\bB(b_3^1 \mid b_2^1)=10^{-3}$,
$p_\bB(b_4^1 \mid b_3^1)=10^{-2}$. Thus, the target event $b_4^1$ occurs only if all other events $b_1^1,b_2^1,b_3^1$ occur, which happens with probability $1/10^8$ only. In this case we would say that the nodes $B_1$ and $B_3$ behaved particularly unexpectedly, since they showed a behavior that they only show in $1$ out of $1,000$ cases. The node $B_2$ behaved as usual, since $b_2^1$ occurs with probability $1$ conditional on $b_1^1$. The target event $b_4^1$ is rare marginally, but the mechanism producing it is only moderately surprising: conditional on $b_3^1$, the event $b_4^1$ occurs with probability $10^{-2}$, which is not that rare compared to how rare the target event is. 
Thus, we get pathway explanation score $\cE_{R\to t}^K = 1- \log 10^2/{\log 10^8} = 3/4$,
if we take only $B_1,B_3$
as root causes, but pathway explanation score
$\cE_{R\to t}^K = 1$, if $B_4$ is also  added to the root causes. 
\end{example}


In Appendix \ref{subsec:paiwiseScore} we show that even when each event strongly explains the next one in a causal chain, this does not guarantee that the initial event has a high pathway explanation score. Therefore, strong local causal explanations do not necessarily imply a strong global causal explanation. The following result describes a necessary condition for the pathway explanation score to be close to $1$: 
\begin{lemma}[no large log-likelihood gaps]\label{lem:scoreGaps}
For any node $i$ we define
the log-likelihood gap 
\[
\Delta_i := [\log P_\bB(B_{\mathrm{Pa}(i)}=\one) - 
\log P_\bB(B_i=1)]_+,
\]
where $[a]_+ := (a + |a|)/2$. 
Then we have 
\begin{align}
\cE^K_{R\to t} \leq
1 - \frac{\sum_{i\not\in R} \Delta_i}{-\log P_\bB(B_t=1)}.
\end{align}
\end{lemma}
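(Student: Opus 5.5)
The plan is to expand the numerator of $\cE^K_{R\to t}$ with the truncated factorization and then bound each factor by its log-likelihood gap. Throughout I assume $0<P_\bB(B_t=1)<1$, so that $-\log P_\bB(B_t=1)>0$ and dividing by it preserves inequalities. I also assume $P_\bB(\bB_{\mathrm{Pa}(i)}=\one)>0$ wherever the conditionals below appear; the paper's formulas implicitly assume this.

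\emph{Step 1.} By Definition~\ref{def:pexscore} and the truncated factorization \eqref{eq:clusterInter}, which holds since $P_\bB$ is Markov relative to $\cC$,
\begin{align*}
-\log P_\bB(\bB=\one\mid do(\bB_R=\one)) = \sum_{i\notin R} \bigl(-\log P_\bB(B_i=1\mid \bB_{\mathrm{Pa}(i)}=\one)\bigr).
\end{align*}
Hence
\begin{align*}
\cE^K_{R\to t} = 1 - \frac{\sum_{i\notin R} -\log P_\bB(B_i=1\mid \bB_{\mathrm{Pa}(i)}=\one)}{-\log P_\bB(B_t=1)}.
\end{align*}
It therefore suffices to show, for every $i\notin R$, that
\begin{align*}
-\log P_\bB(B_i=1\mid \bB_{\mathrm{Pa}(i)}=\one)\geq \Delta_i.
\end{align*}

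\emph{Step 2 (the per-node bound).} Write the conditional probability as a ratio and drop the conjunction with the parent event:
\begin{align*}
P_\bB(B_i=1\mid \bB_{\mathrm{Pa}(i)}=\one)=\frac{P_\bB(B_i=1,\bB_{\mathrm{Pa}(i)}=\one)}{P_\bB(\bB_{\mathrm{Pa}(i)}=\one)}\le \frac{P_\bB(B_i=1)}{P_\bB(\bB_{\mathrm{Pa}(i)}=\one)}.
\end{align*}
Taking $-\log$ gives
\begin{align*}
-\log P_\bB(B_i=1\mid \bB_{\mathrm{Pa}(i)}=\one)\ge \log P_\bB(\bB_{\mathrm{Pa}(i)}=\one)-\log P_\bB(B_i=1).
\end{align*}
Since a probability is at most $1$, the left-hand side is also $\ge 0$. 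Being at least both quantities, it is at least their maximum, which is exactly the positive part $\Delta_i$. When $\mathrm{Pa}(i)=\emptyset$ the parent event has probability $1$, so the bound reads $-\log P_\bB(B_i=1)\ge [-\log P_\bB(B_i=1)]_+$, which is consistent.

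\emph{Step 3.} Summing the Step~2 bound over $i\notin R$ and inserting it into the expression from Step~1 yields the claimed upper bound on $\cE^K_{R\to t}$.

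None of these steps is a real obstacle. The only care needed is in the sign conventions, namely that the denominator $-\log P_\bB(B_t=1)$ is positive, and in the positive-part bookkeeping in Step~2. The key insight is the elementary inequality $P(A\mid C)\le P(A)/P(C)$.
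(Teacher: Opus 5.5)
Your proof is correct and follows the paper's argument: both apply the inequality $P_\bB(B_i=1\mid \bB_{\mathrm{Pa}(i)}=\one)\le P_\bB(B_i=1)/P_\bB(\bB_{\mathrm{Pa}(i)}=\one)$ to each non-root factor of the truncated factorization. You also spell out a step the paper leaves implicit: since $-\log$ of a probability is nonnegative, the positive part $[\cdot]_+$ in $\Delta_i$ is justified.
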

The proof follows 
easily from 
$P_\bB(B_i=1 \mid \bB_{\mathrm{Pa}(i)}=\one)\leq P_\bB(B_i=1)/P_\bB(\bB_{\mathrm{Pa}(i)}=\one)$, or
as a special case of Lemma 3.3 in \citet{orchard2025root}.
Lemma \ref{lem:scoreGaps}
states that good explanations 
need to avoid events as mediators that are significantly less likely than their parent event. 
As a simple application of Lemma \ref{lem:scoreGaps}, we also get a low explanation score when a mediator event is less rare than the target since we get a significant gap for the target itself, as in Example \ref{ex:orAnd}.

\paragraph{Relation to probability of necessity and sufficiency.}
At first glance, the notion of explaining pathway seems to miss the aspect of {\it necessity}: while it formalizes the condition that the target event and the mediators between root causes and target are likely to happen given the 
root causes happened, it does not explicitly state that the target event is unlikely to happen in the absence of the root causes. Likewise, one may be surprised that explanation score of \citet{Oesterle2025} in general fails to formalize necessity. 
However, there is
a sense in which necessity is implicit because
the target event is rare:
the intervention
$do(\bB_R=\one)$ is, with high probability necessary
for $B_t=1$
because without it the latter is unlikely by assumption. 
This is formally shown
in Appendix \ref{sec:necessity}, based on the notion {\it probability of necessity}
\cite{tian2000probabilities} (where we also argue why 
probability of {\it sufficiency} is of minor relevance
in our context). 
This result, however, 
does not imply that it is necessary to set {\it all} root causes to $1$.
The choice of a sufficiently small set of root causes will be driven by trading off simplicity of the explanation 
(in the sense of a small number of root causes)
with pathway explanation score.  

In Appendix \ref{sec:AppRung2}
we explain why our causal 
explanations refer to interventional probabilities only, rather than causal counterfactuals in the sense of rung 3 in the ladder of causation \cite{Pearl2018}.

\paragraph{Measuring quality of explanations.}

We now state formal criteria that should be satisfied by a causal pathway to be a good explanation: 

\noindent
{\it 1. Causal relevance of all events for the target:}
Given some cluster $(\cC, \bB, \bB_R, P_\bB)$,
every event $i\in K$ should satisfy
 $\cE_{\{i\} \cup S  \to t} - \cE_{S  \to t}> 0$ 
 for some $S\subset K$.
 Depending on one's preferences regarding the trade-off between simplicity and detail, one may set a threshold 
for the desired increase in explanation score.
Note that we do {\it not} consider {\it pathway} explanation scores $\cE^K_{R \to t}$ for this purpose because we want to assess relevance for the target and not for other events in the pathway. 

\noindent
{\it 2. Sparsity of root causes:} 
Following Occam's Razor 
we a priori 
prefer explanations with a small number of root causes.
Whenever the target event is the result of the respective data point {\it being drawn from a different distribution} this inductive bias is also aligned with the ``sparse mechanism shift hypothesis'' in \citep{Schoelkopf2021}. 
However, rare events may also originate from {\it selecting} the statistical unit / time instant for which this rare event happened. 
Later we will have target events that come from thresholding continuous variables and the statistical unit we discuss is 
selected because it is the 
 most extreme value within a population. Rare events that result from this selection bias are not necessarily points that are ``out of distribution''. In this case, we may also expect  multiple root causes. 
For instance, if $X_1,\dots,X_d$ are independent identically distributed Gaussians, extreme values of $\sum_{i=1}^d X_i$ likely originate from 
extreme values of multiple $X_i$,  while 
$X_i$ with subexponential  $X_i$ render
single root causes more likely, see Appendix \ref{sec:multipleRC}. 


\noindent
{\it 3. Selecting the right root causes:}
The pathway explanation score $\cE^K_{R\to t}$ should be reasonably close to $1$, without unnecessarily extending $R$. 
For any desired size $|R|$, 
\eqref{eq:modAddivity}
implies that 
the best $R$ can be obtained by a greedy approach in a scenario where the set $\bB$ is fixed: starting from $R=\emptyset$, add 
the event $i$ to $R$ that maximizes the score 
$\cE^K_{\{i\}\cup R \to t }$. 
After choosing $R$, we can reduce the graph $\cC$ to the relevant part, the pathway $\cP$, by dropping arrows into $R$ (although we may not drop all of them, as explained earlier).

\section{Pathway Abstraction \label{sec:abstractions}}
 We now introduce causal pathways that constitute approximate abstractions of more fine-grained causal models. The abstraction proceeds by reducing the set of nodes and mapping the remaining variables $X_i$ into a binary representation. In addition, edges may be added or removed to provide the best possible approximation of the causal relationships among the resulting binary variables.\footnote{We require approximate equality of interventional probabilities, cf. related notions of approximate abstraction in the literature \cite{beckers2020approximate}. Here, however, ``approximate'' is defined by our logarithmic scaling, which is particularly tailored to rare events.
 }

\subsection{Binary Representation of Variables}

Let us first focus on 
generating
binaries from variables 
with arbitrary ranges. 
Consider, for instance, a cause-effect pair $X,Y$ coupled by the linear structural equation $Y=\alpha X + N$, where $N$ is an independent noise term and all three variables are real-valued. If we induce a large perturbation $x$ and accordingly obtain an unusually large $y$, we would certainly accept the verbal explanation ``$y$ is large because $x$ has been large''. To get formal criteria
for whether this explanation is plausible we introduce the events $B_1$ and $B_2$ corresponding to the events\footnote{
For notational convenience, we avoid superscripts like in $B^x_1,B^y_2$, but keep the dependence on $x$ and $y$ in mind.}
$X\geq x$ and $Y\geq y$,
and consider only accepting the explanation if
$P(Y \geq y|X\geq x)$ is ``sufficiently large'' in the sense specified below. 
In cases where $Y$ depends
on $X$ in a  non-monotonic way, for instance, when $Y=\sin (X) +N$, we would 
not accept explanations like ``$y$ is large because $x$ is large'' and instead suggest the ``featurized'' version ``$y$ is large because $\sin(x)$ is large'' to get a ``feature-monotonic'' causal relation, formally introduced here\footnote{cf. monotonicity condition in Lemma 3.5 in \citet{orchard2025root}.}:   
\begin{definition}[feature monotonicity]
Consider a causal DAG $\cG$
with nodes $\bX = \{X_1, \dots, X_n\}$ and probability distribution $P_\bX$, such that each $X_i$ attains values in ${\cal X}_i$. For any $I \subseteq N:= \{1, \dots, n\}$, define feature functions $\tau_I(\bX_I):=(\tau_i(X_i))_{i\in I}$, where $ \tau_i: {\cal X}_i \to \mathbb{R}$ for all $i \in I$. Further, write 
$\tau_I(\bx'_I)\geq \tau_I(\bx_I)$ iff 
$\tau_i(x'_i)\geq \tau_i(x_i)$ for all $i\in I$. Then the Markov kernel $P_\bX(X_j \mid\bX_{\mathrm{Pa}(j)})$ satisfies feature monotonicity (with
respect to $\tau_j, \tau_{\mathrm{Pa}(j)}$)
if 
\begin{align}
P_\bX \big (\tau_j(X_j) \geq t \, \mid \,\bx'_{\mathrm{Pa}(j)} \big ) \; \geq \; P_\bX\big (\tau_j(X_j) \geq t \, \mid\, \bx_{\mathrm{Pa}(j)} \big ), \nonumber \\
\end{align} for all $t\in \mathbb{R}$,  whenever $\tau_{\mathrm{Pa}(j)}(\bx'_{\mathrm{Pa}(j)}) \geq \tau_{\mathrm{Pa}(j)}(\bx_{\mathrm{Pa}(j)})$.    \end{definition} Here, featurizing 
variables has two 
goals. First, it
defines \emph{outlier scores}
\cite{root_cause_analysis} via
$S(x_j):=-\log P_\bX(\tau_j(X_j)\geq \tau_j(x_j))$, which measures the unexpectedness of events
for arbitrary data modalities.
Second, it enables causal statements that are more robust than  statements about the impact of setting a continuous variable to a specific value, cf. \citet{diaz2023nonparametric}. 
We then obtain:
\begin{lemma}[bounding likelihood for each Markov kernel]\label{lem:kernelfeaturem}
Let $\bx_{\mathrm{Pa}(j)}$ be arbitrary, $x_j$ drawn from $P_\bX(X_j \mid\bX_{\mathrm{Pa}(j)}= \bx_{\mathrm{Pa}(j)})$ and assume that $P_\bX(X_j \mid\bX_{\mathrm{Pa}(j)})$ satisfies
feature monotonicity.
Then, for any $\alpha \in [0,1]$, the probability for obtaining an $x_j$ with 
$P_\bX \big (\tau_j(X_j)\geq \tau_j(x_j) \mid \tau_{\mathrm{Pa}(j)}(\bX_{\mathrm{Pa}(j)}) \geq  \tau_{\mathrm{Pa}(j)}(\bx_{\mathrm{Pa}(j)}) \big ) \leq \alpha $, is at most $\alpha$.
\end{lemma}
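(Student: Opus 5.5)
The plan is to reduce the statement to the standard fact that a p-value computed from its own sampling distribution is stochastically larger than uniform, after using feature monotonicity to compare two conditional tail probabilities. Fix $\bx_{\mathrm{Pa}(j)}$ and define the two tail functions
\[
G(s) := P_\bX\big(\tau_j(X_j)\geq s \mid \bX_{\mathrm{Pa}(j)}=\bx_{\mathrm{Pa}(j)}\big),
\]
\[
H(s) := P_\bX\big(\tau_j(X_j)\geq s \mid \tau_{\mathrm{Pa}(j)}(\bX_{\mathrm{Pa}(j)})\geq \tau_{\mathrm{Pa}(j)}(\bx_{\mathrm{Pa}(j)})\big).
\]
The event we must bound is $\{H(\tau_j(x_j))\leq\alpha\}$, where $x_j\sim P_\bX(X_j\mid \bx_{\mathrm{Pa}(j)})$.

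\emph{Step 1 (comparison $H\geq G$).} I would write $H(s)$ as an average of $P_\bX(\tau_j(X_j)\geq s\mid \bx'_{\mathrm{Pa}(j)})$ over parent values $\bx'_{\mathrm{Pa}(j)}$ drawn from $P_\bX$ conditioned on $\tau_{\mathrm{Pa}(j)}(\bX_{\mathrm{Pa}(j)})\geq\tau_{\mathrm{Pa}(j)}(\bx_{\mathrm{Pa}(j)})$. This uses the Markov condition, since $X_j$ depends on the rest only through its parents. Every such $\bx'_{\mathrm{Pa}(j)}$ satisfies the componentwise ordering in the definition of feature monotonicity, so each integrand is $\geq G(s)$. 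Hence $H(s)\geq G(s)$ for all $s$, and therefore
\[
\{H(\tau_j(x_j))\leq\alpha\}\subseteq\{G(\tau_j(x_j))\leq\alpha\}.
\]
I also need the conditioning event to have positive probability. It contains the actual parent configuration region, and I would simply assume this; the statement implicitly presupposes it.

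\emph{Step 2 (probability integral transform).} Let $T=\tau_j(X_j)$ under the conditional law with parents fixed at $\bx_{\mathrm{Pa}(j)}$, so that $G(s)=P(T\geq s)$. I would show $P(G(T)\leq\alpha)\leq\alpha$ for any real random variable $T$, including ones with atoms. Set $s^*:=\inf\{s: G(s)\leq\alpha\}$. Since $G$ is nonincreasing and left-continuous, $\{G(T)\leq\alpha\}$ is either $\{T\geq s^*\}$ (when $G(s^*)\leq\alpha$) or $\{T> s^*\}$. In the first case the probability is $G(s^*)\leq\alpha$. In the second it is $\lim_{s\downarrow s^*}G(s)\leq\alpha$. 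Combining this with Step 1 gives the claim.

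The main obstacle is the rigorous handling of Step 2 for discrete or mixed laws, where ties and atoms make the case distinction and left-continuity essential. Step 1 is straightforward once the conditioning on the parent feature event is written as a mixture. If $\bx_{\mathrm{Pa}(j)}$ is itself random, the bound holds conditionally for each value and hence unconditionally.
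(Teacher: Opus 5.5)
Your proposal is correct and follows the paper's argument. The paper's proof consists exactly of your Step 1 inequality $H(s)\geq G(s)$, obtained from feature monotonicity, and it leaves the super-uniformity of $G(\tau_j(x_j))$ (your Step 2) implicit in the phrase ``which then completes the proof.'' Your mixture justification of Step 1 and your left-continuity argument for laws with atoms simply supply the details the paper omits; note that Step 1 needs only the tower property, not the Markov condition, since the conditioning event is a function of the parents alone.
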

Note that the non-trivial part of the statement comes from conditioning on the event $\tau_{\mathrm{Pa}(j)}(\bX_{\mathrm{Pa}(j)}) \geq  \tau_{\mathrm{Pa}(j)} (\bx_{\mathrm{Pa}(j)})$
instead of  
$\bX_{\mathrm{Pa}(j)}=\bx_{\mathrm{Pa}(j)}$
used in the generating process for $x_j$. However,
feature monotonicity entails 
\begin{align}
&P_\bX \big (\tau_j(X_j)\geq \tau_j(x_j) \mid \tau_{\mathrm{Pa}(j)}(\bX_{\mathrm{Pa}(j)}) \geq  \tau_{\mathrm{Pa}(j)} (\bx_{\mathrm{Pa}(j)}) \big ) \nonumber \\
&\geq P_\bX \big (\tau_j(X_j)  \geq  \tau_j(x_j)\mid \bX_{\mathrm{Pa}(j)}=\bx_{\mathrm{Pa}(j)} \big ),
\end{align}
which then completes the proof.
To get an intuition about the implications of Lemma \ref{lem:kernelfeaturem}, we consider a cause-effect pair $X\to Y$ 
with feature monotonic $P(Y \mid X)$ with respect to $\tau_X,\tau_Y$. For any $x$-value, a $y$-value drawn from $P(Y \mid X=x)$ 
satisfies 
$-\log P \big(\tau_Y(Y)\geq \tau_Y(y) \mid \tau_X(X)\geq \tau_X(x) \big )\geq c$
with probability at most $e^{-c}$. Accordingly,  defining the binary events 
$\tau_X(X)\geq \tau_X(x)$ and
$\tau_Y(Y)\geq \tau_Y(y)$
typically results in  explanation scores close to $1$ when the generated value $y$ is a ``strong outlier'',
that is, $P(\tau_Y(Y) \geq\tau_Y(y))$ is small. 

The following result generalizes Lemma \ref{lem:kernelfeaturem} to general DAGs:
\begin{theorem}[bounding multivariate likelihoods]
\label{thm:convStat}
For any $\bx_R$, let $\bar{R}:=N\setminus R$, and let $\bx_{\bar{R}}$ be generated from $P_\bX(\bX_{\bar{R}} \mid do(\bx_R))$. Assume that for all $j \in N$, the
mechanisms $P_\bX(X_j \mid \bX_{\mathrm{Pa}(j)})$ are feature monotonic w.r.t. the feature functions $\tau_1,\dots,\tau_n$ and
let $B_j$ denote the event
$\tau_j(X_j)\geq \tau_j(x_j)$. 
Then the probability of
obtaining an $\bx$ with 
\begin{equation}\label{eq:L}
L:= - \sum_{j\not\in R}
\log P_\bX(B_j=1 \mid \bB_{\mathrm{Pa}(j)}=\one) \geq c,  
\end{equation}
is at most 
\begin{equation}\label{eq:pvaluec}
p=\sum_{i=0}^{n - |R|-1} \frac{c^i}{i!} e^{-c}. 
\end{equation}

\end{theorem}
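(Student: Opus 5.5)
Let $m:=n-|R|$ and order the non-root nodes $j_1,\dots,j_m$ topologically. For each $j\notin R$ I set
\[
U_j := P_\bX\big(\tau_j(X_j)\geq \tau_j(x_j) \mid \tau_{\mathrm{Pa}(j)}(\bX_{\mathrm{Pa}(j)}) \geq \tau_{\mathrm{Pa}(j)}(\bx_{\mathrm{Pa}(j)})\big),
\]
so that $L=-\sum_{j\notin R}\log U_j$. The plan is to show that each $-\log U_j$ is stochastically dominated by an $\mathrm{Exp}(1)$ variable, \emph{conditionally} on the values of all earlier nodes in the topological order. Then $L$ is dominated by a sum of $m$ i.i.d.\ $\mathrm{Exp}(1)$ variables, i.e.\ a $\mathrm{Gamma}(m,1)$ variable. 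Its tail is exactly $P(\mathrm{Gamma}(m,1)\geq c)=\sum_{i=0}^{m-1}\frac{c^i}{i!}e^{-c}$, which is \eqref{eq:pvaluec}.

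\textbf{Step 1 (one kernel).} Fix $j\notin R$ and condition on all values $\bx_{\{j_1,\dots\}}$ of nodes preceding $j$, including $\bx_R$ and hence $\bx_{\mathrm{Pa}(j)}$. Under $do(\bx_R)$, $x_j$ is drawn from the unmodified kernel $P_\bX(X_j\mid \bX_{\mathrm{Pa}(j)}=\bx_{\mathrm{Pa}(j)})$. By the Markov property this kernel does not depend on the earlier non-parent nodes. Lemma \ref{lem:kernelfeaturem} then gives $P(U_j\leq\alpha\mid \text{past})\leq\alpha$ for all $\alpha\in[0,1]$. Equivalently, $P(-\log U_j\geq s\mid\text{past})\leq e^{-s}$ for all $s\ge 0$.

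One subtlety needs care. The conditioning event in $U_j$ refers to the \emph{observational} distribution $P_\bX$, not the interventional one. This does not matter: $U_j$ is just a deterministic function of $(x_j,\bx_{\mathrm{Pa}(j)})$, and Lemma \ref{lem:kernelfeaturem} only uses the sampling distribution of $x_j$ given its parents. When parents lie in $R$, the kernel for $x_j$ is the same under the intervention, so the argument goes through.

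\textbf{Step 2 (chaining).} Write $E_k:=-\log U_{j_k}$. Each $E_k$ is a function of the values up to and including $j_k$. Step 1 therefore says that $E_k$ is conditionally super-uniform in the exponential sense, given $\mathcal F_{k-1}$ (the values of all nodes preceding $j_k$).

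I then prove by backward induction that $P(\sum_k E_k\geq c)\leq P(\sum_k G_k\geq c)$ with $G_k$ i.i.d.\ $\mathrm{Exp}(1)$. Condition on $\mathcal F_{m-1}$. The function $h(s):=P(\sum_{k<m}E_k+s\geq c\mid\mathcal F_{m-1})$ is nondecreasing in $s$. Stochastic dominance gives $E[h(E_m)\mid\mathcal F_{m-1}]\leq E[h(G_m)]$ for an independent $G_m$. Then iterate downward, replacing $E_{m-1},\dots,E_1$ one at a time. Equivalently, one can use a coupling $E_k\le G_k$ built from conditional quantile transforms.

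\textbf{Step 3.} Compute the Gamma/Erlang tail, which is the standard Poisson identity $P(\mathrm{Gamma}(m,1)\ge c)=P(\mathrm{Poisson}(c)\le m-1)$.

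The main obstacle is Step 2: turning the per-kernel bounds into a bound for the sum, given that the $E_k$ are dependent. The induction has to be done conditionally, so that the earlier terms are frozen while the last one is replaced. Care is also needed with atoms, since $U_j$ may be discrete; the inequality $P(U_j\le\alpha)\le\alpha$ is exactly what handles them.
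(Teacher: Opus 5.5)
Your proposal is correct and is essentially the paper's proof. Both arguments use Lemma~\ref{lem:kernelfeaturem} to bound each $-\log U_j$ (the paper's $L_j$) by an $\mathrm{Exp}(1)$ variable conditionally on the causal past, chain these conditional dominations to dominate $L$ by an Erlang$(n-|R|,1)$ variable, and read off its tail. Your backward induction in Step 2 spells out the chaining that the paper asserts in one line, and, as in the paper's proof, the tail formula implicitly requires $c\geq 0$.
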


The proof is provided in Appendix \ref{proof_4_3}. To understand the significance of 
Theorem \ref{thm:convStat},
consider the joint distribution\footnote{
For any $S$,  we denote by $P_\bX(\bB_S)$ the pushforward measure under our binarization map {\it for fixed} $\bx_S$.}  
$P_\bB(\bB):= \prod_{j\in N} P_\bX ( B_j \mid \bB_{\mathrm{Pa}(j)})$.
Note that $P_\bB(\bB)$ is, 
by construction, Markov relative to $\cG$, whereas the true distribution $P_\bX(\bB)$ need not be; even  for 
$X_1\to X_2\to X_3$, binarization can destroy the conditional independence
$X_1\independent X_3\,|X_2$. 
Let us, nevertheless, consider a scenario where $P_\bB(\bB)$ is a reasonable approximation for $P_\bX(\bB)$, because we can easily compute interventional probabilities 
for the former. 
We then observe that
$L$ is the negative log-likelihood of $P_\bB(\bB=\one \mid do(\bB_R=\one))$, and thus it follows from  
Theorem \ref{thm:convStat} that the pathway  explanation score is close to $1$ with high probability, whenever $\bx_R$ induces a value $x_t$ of the target node such that $P_\bB(B_t=1)$ is small. 
Although feature monotonicity may not hold in general, we assume that {\it good} explanations should use features 
that do not deviate too much from monotonicity and use the p-value in \eqref{eq:pvaluec}
as guidance on which deviations from 
explanation score $1$ we tolerate for the given number $n$ of nodes, regardless of whether feature monotonicity holds exactly for a given case.

\subsection{Causal Pathways as Approximate Abstractions}
To enable simple causal explanations, we introduce pathways (potentially with fewer links and variables than the original graph) with distributions that do not perfectly align with the true one, but do not  deviate too much in terms of interventional and observational probabilities. 
This raises the issue that the interventions $do(B_i=b_i)$, with
$b_i=0,1$, are only well-defined in the model $P_\bB$, but are ill-defined in $P_\bX$ since there are generally many distinct  interventions $do(X_j=x_j)$ that correspond to the same value of $B_j$. This aligns with the known ill-definedness of interventions on coarse-grained variables \citep{spirtes_scheines_2004}.
Consider, for instance, the event $X_j\geq x_j$ for some unusually large $x_j$. Then $do(X_j=x_j-\epsilon)$, which sets $B_j$ to $0$, 
will typically result in distributional changes that are close to $do(X_j=x_j)$, which sets $B_j$ to $1$, and thus far from the effect of ``normal'' $x_j$. 
We solve this problem by 
interpreting $do(\bB_j=\bb_j)$ in the original model as a 
randomized intervention that draws $X_j$ from the distribution 
$P_{\bX}(X_j \mid B_j=b_j)$, following the idea of ``natural micro-realization'' given by Definition 8 in \citet{Zhu2024}. For multi-node
interventions we draw independently from 
the product distribution 
$\prod_{i\in S} P_{\bX}(X_i \mid B_i=b_i)$
and thus define:
\begin{definition}[natural micro-realization]\label{def:micror}
The natural micro-realization of $do(\bB_S=\bb_S)$ in $\mathcal{G}$ is given by
\begin{align}
P_{\bX} & (\bX \mid do({\bB}_S ={\bb}_S)) \nonumber\\
& := \mathbb{E}_{\bx_S \sim \prod_{i\in S} P_{\bX}(X_i \mid B_i=b_i)} [P_{\bX} (\bX \mid do({\bX}_S={\bx}_S))]. \nonumber \\
\end{align}
\end{definition}

For an unconfounded cause-effect pair $X\to Y$, for instance, this definition ensures that $P(Y \mid do(X\geq x)) = P(Y \mid X\geq x)$.
We now introduce pathway abstraction formally:
\begin{definition}[pathway abstraction] 
A pathway abstraction of a causal DAG $\cG$, with nodes $\bX=\{X_1,\dots,X_n\}$ and probability distribution $P_\bX$,
is a pathway explanation $(\cC,\cP,\bB= \{B_1,\dots,B_k\}, B_t, \bB_R, P_\bB)$, where $P_\bB$ is Markov relative to $\cC$. For each $j \in K$, there exists a subset $I_j \subseteq N$ and 
feature functions $\tau_{I_j}(\bX_{I_j})=(\tau_{i_j}(X_{i_j}))_{{i_j} \in {I_j}}$ with $\tau_{i_j}:{\cal X}_{i_j} \to \mathbb{R}$ for all $i_j \in I_j$, such that for an observed value $\bx_{I_j}$, the binary variable $B_j$ is defined by 
$B_j := \chi_{[\tau_{I_j}(\bx_{I_j}),\infty)}(\tau_{I_j} (\bX_{I_j}))$. The explanation score of the pathway abstraction is defined as the explanation score of $\cP$, and the accuracy of the pathway abstraction is defined by 
\begin{align}\label{eq:accuracy}
& r := 1- \max_{S\subseteq K} \max_{\bb_S}\Big\{  \nonumber \\
& \frac{D_{KL}[P_\bX (\bB \mid  do (\bB_S =\bb_S) \, \| \,P_\bB(\bB \mid do (\bB_S =\bb_S))]}{-\log P_\bX(B_t=1)} \Big\}, \nonumber \\
\end{align}
where $S=\emptyset$ is also included to  capture observational probabilities too. 
\end{definition}
Note that the pathway explanation score can also be rephrased in terms of KL-divergences as 
\begin{equation}\label{eq:asKL}
\cE^K_{R\to t} = 1 - \frac{D_{KL}(\delta_\one (\bB) \| P_\bB (\bB \mid  do(\bB_R=\one))}{ -\log P_\bB(B_t=1)},
\end{equation}
where $\delta_\one$ denotes the point mass on $\bB=\one$ (see remark after Definition \ref{def:clusterScore}).
Hence, the explanation score measures the distance of $\delta_\one$ from the interventional distribution  
in $P_\bB$
with respect to $\cP$,
where all root causes are set to $1$, and accuracy measures the distance between the distribution of the abstraction and the true distribution projected to the binaries. 
In this sense, both quantities measure the quality of the explanation in terms of KL-divergences. Note that
minimizing the KL-divergence  \eqref{eq:accuracy}
is equivalent to maximizing the expected log-likelihood
$\mathbb{E}_{P_\bX(\bB \mid do(\bb_S))} [\log P_\bB (\bB \mid do(\bb_S))]$. Therefore
trading off explanation score with accuracy amounts to 
trading off two likelihood terms, namely the one of the event $\bB=\one$ under the model
$P_\bB(\bB \mid do(\bB_R=\one))$ and 
the likelihood of observations from
$P_\bX(\bB \mid do(\bb_S))$
under the models 
$P_\bB(\bB \mid do(\bb_S))$.

\subsection{Examples of Pathway Abstractions}
We first start with a simple example where the abstraction consists only in binarization of 
two continuous variables:

\begin{example}[cause-effect pair]\label{ex:cePair}
Assume $X\to Y$ has the linear   structural model
$Y = \rho X +  N$, where $X,Y$  are standard Gaussian variables with joint probability distribution $p$, and $N$ is a Gaussian with variance $\sigma_N^2 :=1-\rho^2$. Let $(x,y)$ be an arbitrary pair. 
Defining the feature functions $\tau_X(x):=x$ and $\tau_Y(y):=y$, we obtain binary variables 
$B_1$ and $B_2$ for 
the events $X\geq x$ and  $Y\geq y$, respectively. For the pathway abstraction we set $P_\bB(\bB):=P(\bB)$, where cluster $\cC$ and pathway $\cP$ are both defined by the DAG  $B_1 \to B_2$, with root cause $B_1$ and target $B_2$.
Due to Theorem 3.1 in \cite{Zhu2024}, the natural micro-realization entails $p_{\bB} (b_2 \mid do(b_1)) = p (b_2 \mid do(b_1))$, hence, the pathway abstraction has accuracy $1$.
The explanation score reads
\begin{align*}
\cE^{\{1, 2\}}_{1 \to 2} =  1- \frac{\log p (b_2^1 \mid b_1^1)}{\log p(b_2^1)}  = \frac{\log \frac{P(X\geq x, Y\geq y)}{P(X\geq x)P(Y\geq y)}}{-\log P(Y\geq y)}.
\end{align*}
Figure \ref{fig:ce} shows isolines
of the (pathway) explanation score for different pairs $(x,y)$ for $\rho=0.5$. One can see that for outliers $x$ above $3$, every $y$
below $\rho \cdot x$ yields $q> 0.8$. We propose to verbalize this by  `the event $X\geq x$  explains $Y \geq y$ up to at least $80 \%$'. In the region with explanation score far below $50\%$ we would reject the statement
``$Y\geq y$ has been caused by $X\geq x$'' and also the informal statement 
``$y$ is large because $x$ is large''.
\end{example}

Example \ref{ex:cePair}
shows that the pathway explanation score 
conceptualizes a certain kind of {\it robust} causal statements:
It does {\it not} measure whether the specific value $x$ increases the probability of the event $Y\geq y$. Instead, it measures whether `typical values' of $X$ 
from the interval $[x,\infty)$ have this effect, which is a more valuable insight since it is more general and falsifiable. The following example shows why {\it root nodes} in a pathway abstraction 
are not necessarily {\it root causes}, but potentially non-rare events that are nevertheless crucial as context. 

\begin{figure}
\centerline{
\includegraphics[width=0.47\textwidth]{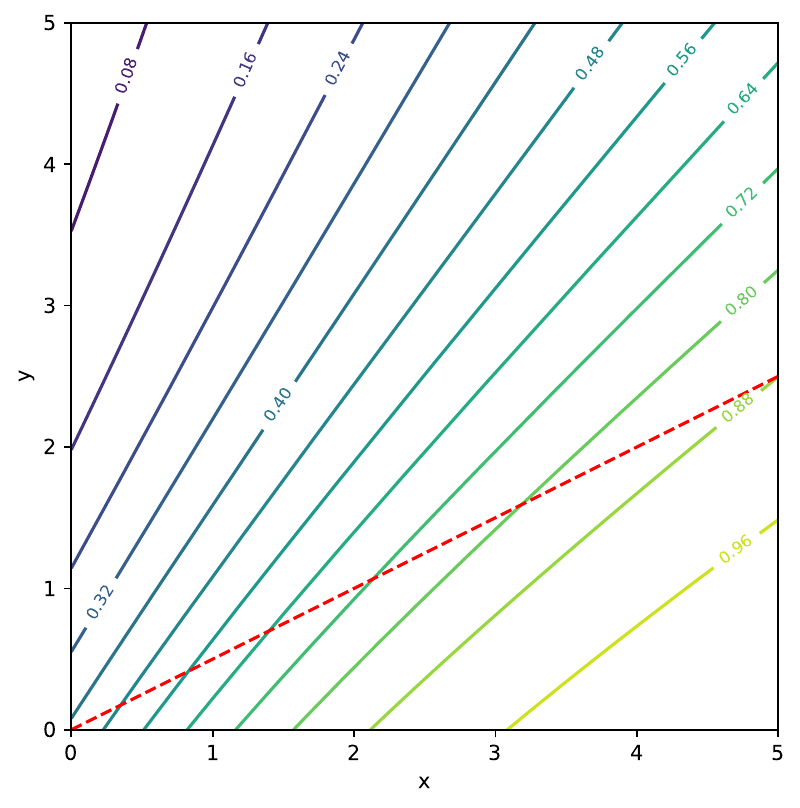}
}
\caption{\label{fig:ce} 
(Pathway) explanation scores  $\cE_{1 \to 2} =\cE^{\{1,2\}}_{1\to 2}$ for the binary cause effect pair obtained from thresholding a linear causal relation of Gaussians. For $y \approx \rho x$ (points near the red line) the explanation score reaches $80\%$ whenever $x$ is an outlier of strength $3$ or more.}
\end{figure}

\begin{example}[non-rare event as confounder] \label{ex:binaryConfounder}
Let $B_1,B_2,B_3$ be binary variables with the causal structure in 
figure~\ref{fig:3B}, middle.
Let $B_1$ be an unbiased coin flip, and
$P(B_2=1 \mid B_1=0)=1$ 
and 
$P(B_2=1 \mid B_1=1)=\delta$
for some small $\delta>0$. Moreover, $B_3=B_1 \wedge B_2$. 
The target event $B_3=1$ is rare as it occurs with probability $\delta/2$ only. 
Although $B_2=1$ is not  rare, the intervention $do(B_2=1)$ has a strong effect on the  target and makes it happen with probability $1/2$. Here, the context $B_1=1$ is crucial to understand the big impact of 
the intervention on the target because 
$B_2=1$ occurring {\it together} with $B_1=1$ 
causes the unexpected behavior $B_3=1$. 
Hence we choose the graph  in Figure \ref{fig:3B}, right, as pathway $\cP$, and set $P_\bB(\bB)=P(\bB)$  but keep the full DAG in Figure \ref{fig:3B}, middle, as
$\cC$, since $P_\bB$ is not Markov to $\cP$. Here the pathway abstraction map is trivial and we thus obtain accuracy $1$ and pathway explanation score $\cE_{2 \to3}^{\{1,2,3\}}=1- \frac{\log  (1/2)}{\log(\delta/2)}$, which is close to $1$ for small $\delta$. 
In contrast, the simple pathway  $B_2\to B_3$ would fail to explain
why the frequent event $B_2=1$ triggered the rare event $B_3=1$, due to the missing context  $B_1=1$. If we set $P_\bB(B_2,B_3)=P(B_2,B_3)$ for the bivariate abstraction, then $P_\bB(B_3=1 \mid do(B_2=1))=P(B_3 \mid B_2=1)=\frac{\delta}{1+\delta}$, while $P(B_3=1 \mid do(B_2=1))=P(B_1)=\frac{1}{2}$. Thus the bivariate abstraction has accuracy
\begin{align*}
    r&=1-\frac{D_{KL}\!\left[\mathrm{Bern}\big(\frac{1}{2} \big) \,\middle\|\,\mathrm{Bern}\big(\frac{\delta}{1+\delta}\big)\!\right]}{-\log P(B_3=1)}\\
    &= 1-\frac{\log \frac{1+\delta}{2\sqrt{\delta}
    }}{-\log(\delta/2)}, 
\end{align*}
which converges to $1/2$ as $\delta \to 0$.
\end{example}

\begin{figure}
\begin{tikzpicture}[     scale=.45, 
transform shape,
  >=Stealth,
  node distance=2.2cm and 2.8cm, 
  every node/.style={circle, draw, thick, minimum size=1cm, font=\sffamily\bfseries}
]

  \node (X) {$X$};
  \node (Y)[fill=lightgray] [below=of X] {$Y$};
  \node (Z) [right=of Y] {$Z$};

  \draw[->] (X) -- (Y);
  \draw[->] (Y) -- (Z);
  \draw[->] (X) to[bend left=20] (Z);


\end{tikzpicture}
\hspace{0.5cm}
\begin{tikzpicture}[     scale=.45, 
transform shape,
  >=Stealth,
  node distance=2.2cm and 2.8cm, 
  every node/.style={circle, draw, thick, minimum size=1cm, font=\sffamily\bfseries}
]

  \node (X) {$B_1$};
  \node (Y)[fill=lightgray] [below=of X] {$B_2$};
  \node (Z) [right=of Y] {$B_3$};

  \draw[->] (X) -- (Y);
  \draw[->] (Y) -- (Z);
  \draw[->] (X) to[bend left=20] (Z);


\end{tikzpicture}
\hspace{0.5cm}
\begin{tikzpicture}[     scale=.45, 
transform shape,
  >=Stealth,
  node distance=2.2cm and 2.8cm, 
  every node/.style={circle, draw, thick, minimum size=1cm, font=\sffamily\bfseries}
]

  \node (X) {$B_1$};
  \node[fill=lightgray]  (Y) [below=of X] {$B_2$};
  \node (Z) [right=of Y] {$B_3$};

  \draw[->] (Y) -- (Z);
  \draw[->] (X) to[bend left=20] (Z);


\end{tikzpicture}
\caption{\label{fig:3B} Left: 
DAG with three real-valued variables, where we perturb $Y$.
Middle: the same DAG with binary variables $B_1,B_2,B_3$.
To explain the impact of the intervention $do(B_2=1)$ 
(root cause node shown in gray)
on the target $B_3$, we need a pathway $\cP$ that also contains the confounder $B_1$ as context (right). In Example \ref{ex:3Gaussians} where $B_1,B_2,B_3$ are binarizations of $X,Y,Z$ they stand for the events $|X|\leq |x|$, $Y\geq y$, $Z\geq z$.}
\end{figure}

The following example 
shows a similar case 
obtained by binarization of 
continuous variables.
We consider the causal DAG in Figure \ref{fig:3B}, left, where
an extreme value $y$ is induced at $Y$. For understanding the impact of this perturbation on $Z$, it is important to know that $X$ attained normal values to exclude strong influence of the confounder in the perturbed regime. 
Hence, $X$ being normal is required as {\it context} information. The example also shows that binarization can result in accuracy smaller than $1$ because the natural micro-realization of the intervention does not capture the dependencies of different parents entirely. However, 
for our purpose of causal explainability, we do not need accuracy 1 in order to get simple and verbalizable explanations.

\begin{example}[confounder as context]\label{ex:3Gaussians}
For the DAG in figure \ref{fig:3B}, left, assume linear equations
\begin{eqnarray}
Y &=& \alpha X + N_Y, \nonumber \\
Z &=& \beta X + \gamma Y + N_Z,
\label{eq:seConf}
\end{eqnarray}
where $N_Y, N_Z$ are Gaussians, and $X,Y,Z$ are standard Gaussians with joint probability distribution $p$. We set $\alpha = -0.9$, $\beta = 0.9$, and $\gamma = 0.9$. Thus, $X$ negatively affects $Y$, while both $X$ and $Y$ positively affect $Z$, creating negative confounding between $Y$ and $Z$.  
Since we want to verbalize that large $y$ induced large $z$, we define 
feature functions $\tau_2(y):=y$ and 
$\tau_3(z):=z$ to capture large tail events and 
$\tau_1(x):=-|x|$ to describe that $x$ is ``normal''. 
 The event $|X| \leq x$ formalizes the context in which the positive causal effect $Y \to Z$ is not offset by the confounding path through $X$.  We will use the pathway $B_2 \rightarrow B_3 \leftarrow B_1
$, where $B_2$ and $B_3$ are the root cause and target, respectively, while $\cC$ contains $B_1\to B_2$ in addition. 
Since every distribution is Markov to the complete DAG $\cC$, we can set
$P_\bB(\bB):=P(\bB)$, with $P$ shorthand for $P_{X,Y,Z}$.
The pathway explanation score reads:
\begin{align*}
&\cE^{\{1,2,3\}} _{2 \to 3}  =\label{eq:xyzEx} \\
& 
1 - \frac{\log  \big (P(Z\geq z \mid |X|\leq x, Y\geq y) P(|X|\leq x) \big )}{\log P(Z\geq z)}.
\nonumber
\end{align*}

We also consider the bivariate abstraction $B_2 \to B_3$, which omits the context variable $B_1$ and both the pathway and the cluster are given by the DAG $B_2 \to B_3$, and we set $P_\bB(B_2, B_3):=P(B_2, B_3)$. Figure \ref{fig:trivariate}
in Appendix \ref{sec:AppAbstractions} compares the the explanation  scores and abstraction accuracies of the bivariate and trivariate abstractions  for $x=1$ and different values of $y,z$. Only in the regime where the explanation score is not {\it too far from} $1$ would we accept the verbal explanation that ``$z$ is as large because $y$ is  large, in the context that $x$ is normal''. The figure shows that the trivariate abstraction has higher accuracy and explanation score in the outlier region because the ``context variable'' $X$ (although being in a normal state) 
is necessary to understand 
the propagation of the perturbation of $Y$ to $Z$. Therefore, the pathway $B_2\to B_3$ has lower accuracy because it does not include the context variable needed to approximate the relevant interventional distributions. 
\end{example}

In Appendix \ref{sec:binvar} we consider additional pathway abstractions for the three-variable binary causal DAG and give quantitative conditions under which the direct link or the confounder can be removed while maintaining high abstraction accuracy.

\section{Evaluating Causal Explanations \label{sec:falsification}}
The assumption that all mechanisms of non-root causes operated as expected for the statistical observation to be explained entails testable implications. Phrasing causal explanations in terms of pathways paves the way for several types of consistency checks.
First, Lemma \ref{lem:kernelfeaturem} 
and Theorem \ref{thm:convStat} show how to test consistency {\it with data} from the ``normal regime'' because this enables the estimation of the respective probabilities $P$. 
Second, for any given agent  generating causal explanations  (human experts or GenAI tools like LLMs), we can check internal consistency, both in a qualitative and in a quantitative sense: For the qualitative test
of a pathway $\cP$, we can ask the agent about substructures.
If $\cP$ contains the edge $B_i\to B_j$, we can ask ``is $B_i$ a cause of $B_j$
in the context of $\bB_{\mathrm{Pa}(j) \setminus \{i\}}$?''. 
For a quantitative test, we can ask about the probability that 
$B_i=1$ results in $B_j=1$, given $\bB_{\mathrm{Pa}(j) \setminus \{i\}}=\one$. In other words, we check whether the claimed pathway is consistent with believed probabilities, regardless of whether these are Bayesian beliefs or whether they are inferred from data.  
We asked LLMs to describe realistic hypothetical causal pathways that explain why an imaginary person in the USA became homeless. We describe more details in Appendix \ref{sec:LLM}.

We obtained, among others, a
pathway $A\to B \to C \to D \to E$,
with the following events:\\
A. 35-year-old male developed schizophrenia at age 19; lacks consistent access to psychiatric medication due to gaps in Medicaid coverage \\
B. Fired from three consecutive jobs over 2 years due to paranoid behavior and absences during acute episodes \\
C. With no income and \$2,000 in savings depleted on food/medication, evicted from rental apartment \\
D. No family support available—parents died; only sibling cut contact after threatening incident during psychotic break \\
E. Living on streets for 18+ months; chronic homelessness now established 

We then asked the same LLM, in separate conversations,  about the probability that each child node was caused by its parent nodes and obtained the following\footnote{Evaluating the accuracy of these probabilities is beyond the scope of this paper; in future agentic applications, they could be inferred directly from data.} results: 
$P(B \mid A)\approx 0.55$, $P(C \mid B)\approx 0.8$, $P(D \mid C)\approx 0.05$, $P(E \mid D)\approx 0.2$. Further, the a-priori probability of being homeless long-term was estimated at $P(E)\approx 0.0005$. Taking $A$ as root cause results in a pathway explanation score of only
\[
\cE^K_{R\to t} = 1 -\frac{\log (0.55 \cdot 0.8 \cdot 0.05 \cdot 0.2)}{\log 0.0005} \approx 1- \frac{5.43}{7.60} \approx 0.29,  
\]
since the explanation is  mainly weak at the link $C\to D$. After all, $C$ does not refer to any mental illness that would explain event $D$. The evaluation suggests to revise 
the pathway or events in a way that parent events contain the essential  information needed to explain the  
child event, e.g., by  
drawing an additional direct link $A\to D$ or rephrasing $C$. Looking at the LLM output in Appendix \ref{sec:H1}, the reader may notice that it is inconsistent regarding the hypothesized pathway: the displayed chain suggests the pathway $A \to B \to C \to D \to E$, whereas at the end it specifies edges which contain no arrow $C\to D$, but include an additional link $D\to E$. We also checked that this alternate pathway yields a higher explanation score, aligning with the fact that it avoids the weak link $C\to D$.

As a disclaimer for this example, we would like to emphasize that high explanation scores are not sufficient to prove that a  causal explanation is valid. The score does not verify that the underlying causal DAG is true, or that the hypothetical interventional probabilities computed from the DAG reflect the true ones. 

\section{Discussion}

Verbal causal explanations of unexpected events are commonly based on chains of events causing each one after another, although the actual causal relation can be a complex graph.
We have introduced a formal definition of causal pathway that allows us to quantify the extent to  which such simple explanations are consistent with data. The reason why our framework renders explanations falsifiable is that it requires an explicit commitment on which of the mechanisms in a causal network behaved as usual and which ones did not. 
We do not claim that our framework already enables automated testing of the plausibility of causal explanations. Instead, the goal 
is to introduce concepts that enable the discussion of plausibility in a quantitative and data-driven way. 


\section*{Impact Statement}
There is broad consensus that producing reliable and transparent causal explanations would represent a significant advancement in generative AI. Moreover, a key research challenge is to develop methods to assess  whether causal statements generated by AI systems are truthful \cite{bengio2025superintelligent}. The present work advances this objective by formalizing causal explanations. We present concepts that facilitate empirical falsification and transparency by representing causal explanations as explicit pathways. Through this approach, we aim to contribute toward improved automated verification methods for causal explanations, with a specific focus on identifying flawed causal accounts of socially significant events. While the inherent ambiguity of verbal explanations makes formal plausibility assessments difficult, LLMs can be guided to generate explanations conforming to the causal pathway format proposed here. Our example in Section \ref{sec:falsification} demonstrates as a proof of concept illustrating how probabilities derived from the same LLM can serve to evaluate the plausibility of its own explanations.

\section*{Acknowledgments}
We thank William Roy Orchard for carefully reading the paper and providing suggestions for improvement.






\bibliography{bibliography, references, RCA}

@InProceedings{root_cause_analysis,
  title = 	 {Causal structure-based root cause analysis of outliers},
  author =       {Budhathoki, Kailash and Minorics, Lenon and Bloebaum, Patrick and Janzing, Dominik},
  booktitle = 	 {Proceedings of the 39th International Conference on Machine Learning},
  pages = 	 {2357--2369},
  year = 	 {2022},
  editor = 	 {Chaudhuri, Kamalika and Jegelka, Stefanie and Song, Le and Szepesvari, Csaba and Niu, Gang and Sabato, Sivan},
  volume = 	 {162},
  series = 	 {Proceedings of Machine Learning Research},
  month = 	 {17--23 Jul},
  publisher =    {PMLR},
  notes = {\url{arxiv:1912.02724}},
}

@book{Pearl2018,
author = {Pearl, J. and Mackenzie, J.},
title = {The book of why},
year = {2018},
publisher = {Basic Books},
address = {USA},
}

@inproceedings{budhathoki2021did,
  title={Why did the distribution change?},
  author={Budhathoki, Kailash and Janzing, Dominik and Bloebaum, Patrick and Ng, Hoiyi},
  booktitle={International Conference on Artificial Intelligence and Statistics},
  pages={1666--1674},
  year={2021},
  organization={PMLR}
}

@incollection{ shapley:book1952,
  title = {A Value for n-Person Games},
  author = {Shapley, Lloyd S},
  booktitle = {Contributions to the Theory of Games II},
  editor = {Kuhn, Harold W. and Tucker, Albert W.},
  pages = {307--317},
  year = {1953},
  publisher = {Princeton University Press},
  address = {Princeton}
}

@Book{Pearl:00,
  author = 	 {Pearl, J.},
  ALTeditor = 	 {},
  title = 	 {Causality},
  publisher = 	 {Cambridge University Press},
  year = 	 {2000},
  OPTkey = 	 {},
  OPTvolume = 	 {},
  OPTnumber = 	 {},
  OPTseries = 	 {},
  OPTaddress = 	 {},
  OPTedition = 	 {},
  OPTmonth = 	 {},
  OPTnote = 	 {},
  OPTannote = 	 {}
}

@article{Armendariz2011,
title = {Conditional distribution of heavy tailed random variables on large deviations of their sum},
journal = {Stochastic Processes and their Applications},
volume = {121},
number = {5},
pages = {1138-1147},
year = {2011},
issn = {0304-4149},
doi = {https://doi.org/10.1016/j.spa.2011.01.011},
url = {https://www.sciencedirect.com/science/article/pii/S0304414911000238},
author = {Inés Armendáriz and Michail Loulakis},
}

@article{Eberhardt2007,
author = {Eberhardt, F. and Scheines, R.},
title = {Interventions and Causal Inference},
journal = {Philosophy of Science},
volume = {74},
number = {5},
pages = {981-995},
year = {2007}
}

@incollection{IBE20131,
title = {1 - Basic Concepts in Probability},
booktitle = {Markov Processes for Stochastic Modeling (Second Edition)},
publisher = {Elsevier},
edition = {Second Edition},
address = {Oxford},
pages = {1-27},
year = {2013},
isbn = {978-0-12-407795-9},
doi = {https://doi.org/10.1016/B978-0-12-407795-9.00001-3},
url = {https://www.sciencedirect.com/science/article/pii/B9780124077959000013},
author = {Oliver C. Ibe}
}

@InProceedings{Pearl_indirect,
  author = 	 {Pearl, J.},
  title = 	 {Direct and indirect effects},
  year = 	 {2001},
  booktitle=     {Proceedings of the Seventh Conference on Uncertainty in Artificial Intelligence (UAI)},
  publisher=     {Morgan Kaufmann}, 
  address=       {San Francisco, CA},
  pages =        {411--420},
  note = {}
}

@inproceedings{Shapley_Flow,
  author    = {Jiaxuan Wang and
               Jenna Wiens and
               Scott Lundberg},
  editor    = {Arindam Banerjee and
               Kenji Fukumizu},
  title     = {Shapley Flow: {A} Graph-based Approach to Interpreting Model Predictions},
  booktitle = {The 24th International Conference on Artificial Intelligence and Statistics,
               {AISTATS} 2021, April 13-15, 2021, Virtual Event},
  series    = {Proceedings of Machine Learning Research},
  volume    = {130},
  pages     = {721--729},
  publisher = {{PMLR}},
  year      = {2021},
}

@InProceedings{janzing2020feature,
  title = 	 {Feature relevance quantification in explainable AI: A causal problem},
  author = 	 {Janzing, D. and Minorics, L. and Bloebaum, P.},
  booktitle = 	 {Proceedings of the Twenty Third International Conference on Artificial Intelligence and Statistics},
  pages = 	 {2907--2916},
  year = 	 {2020},
  editor = 	 {Chiappa, S. and Calandra, R.},
  volume = 	 {108},
  series = 	 {Proceedings of Machine Learning Research},
  address = 	 {Online},
  month = 	 {26--28 Aug},
  publisher = 	 {PMLR}
}

@inproceedings{Ebtekar2025,
    author = {Ebtekar, Aram and Wang, Yuhao and Janzing, Dominik},
    title = {Toward universal laws of outlier propagation},
    year = {2025},
    publisher = {JMLR.org},
    booktitle = {Proceedings of the Forty-First Conference on Uncertainty in Artificial Intelligence},
    articleno = {53},
    numpages = {17},
    location = {Rio de Janeiro, Brazil},
    series = {UAI '25}
    }

@incollection{Lundberg2017,
title = {A Unified Approach to Interpreting Model Predictions},
author = {Lundberg, S. and Lee, S.},
booktitle = {Advances in Neural Information Processing Systems 30},
editor = {I. Guyon and U. V. Luxburg and S. Bengio and H. Wallach and R. Fergus and S. Vishwanathan and R. Garnett},
pages = {4765--4774},
year = {2017},
publisher = {Curran Associates, Inc.},
}

@inproceedings{Frye2020,
  author    = {Christopher Frye and
               Colin Rowat and
               Ilya Feige},
  editor    = {Hugo Larochelle and
               Marc'Aurelio Ranzato and
               Raia Hadsell and
               Maria{-}Florina Balcan and
               Hsuan{-}Tien Lin},
  title     = {Asymmetric Shapley values: incorporating causal knowledge into model-agnostic
               explainability},
  booktitle = {Advances in Neural Information Processing Systems 33: Annual Conference
               on Neural Information Processing Systems 2020, NeurIPS 2020, December
               6-12, 2020, virtual},
  year      = {2020},
}

@inproceedings{Rubensteinetal17,
title = {Causal Consistency of Structural Equation Models},
author = {Rubenstein, P. K. and Weichwald, S. and Bongers, S. and Mooij, J. M. and Janzing, D. and Grosse-Wentrup, M. and Sch{\"o}lkopf, B.},
booktitle = {Proceedings of the Thirty-Third Conference on Uncertainty in Artificial Intelligence (UAI 2017)},
year = {2017},
}

@article{engelke2025extremes,
  title={Extremes of structural causal models},
  author={Engelke, Sebastian and Gnecco, Nicola and R{\"o}ttger, Frank},
  journal={arXiv preprint arXiv:2503.06536},
  year={2025}
}

@article{kluppelberg2026causal,
  title={Causal analysis of extreme risk in a network of industry portfolios},
  author={Kl{\"u}ppelberg, Claudia and Krali, Mario},
  journal={Canadian Journal of Statistics},
  volume={54},
  number={1},
  pages={e70041},
  year={2026},
  publisher={Wiley Online Library}
}

@inproceedings{beckers2022causal,
  title={Causal explanations and XAI},
  author={Beckers, Sander},
  booktitle={Conference on causal learning and reasoning},
  pages={90--109},
  year={2022},
  organization={PMLR}
}

@inproceedings{
orchard2025root,
title={Root Cause Analysis of Outliers with Missing Structural Knowledge},
author={William Roy Orchard and Nastaran Okati and Sergio Hernan Garrido Mejia and Patrick Bl{\"o}baum and Dominik Janzing},
booktitle={The Thirty-ninth Annual Conference on Neural Information Processing Systems},
year={2025},
url={https://openreview.net/forum?id=7Nxq4RQApu}
}

@article{li2025root,
  title={Root cause discovery via permutations and Cholesky decomposition},
  author={Li, Jinzhou and Chu, Benjamin B and Scheller, Ines F and Gagneur, Julien and Maathuis, Marloes H},
  journal={Journal of the Royal Statistical Society Series B: Statistical Methodology},
  pages={qkaf066},
  year={2025},
  publisher={Oxford University Press UK}
}

@article{hannart2016causal,
  title={Causal counterfactual theory for the attribution of weather and climate-related events},
  author={Hannart, Alexis and Pearl, Judea and Otto, FEL and Naveau, P and Ghil, M},
  journal={Bulletin of the American Meteorological Society},
  volume={97},
  number={1},
  pages={99--110},
  year={2016}
}

@inproceedings{
sanyal2025timetime,
title={time2time: Causal Intervention in Hidden States to Simulate Rare Events in Time Series Foundation Models},
author={Debdeep Sanyal and Aaryan Nagpal and Dhruv Kumar and Murari Mandal and Saurabh Deshpande},
booktitle={Recent Advances in Time Series Foundation Models Have We Reached the 'BERT Moment'?},
year={2025},
url={https://openreview.net/forum?id=VElPLJUr2G}
}

@inproceedings{lin2024root,
  title={Root cause analysis in microservice using neural granger causal discovery},
  author={Lin, Cheng-Ming and Chang, Ching and Wang, Wei-Yao and Wang, Kuang-Da and Peng, Wen-Chih},
  booktitle={Proceedings of the AAAI Conference on Artificial Intelligence},
  volume={38},
  number={1},
  pages={206--213},
  year={2024}
}

@inproceedings{
nagalapatti2025robust,
title={Robust Root Cause Diagnosis using In-Distribution Interventions},
author={Lokesh Nagalapatti and Ashutosh Srivastava and Sunita Sarawagi and Amit Sharma},
booktitle={The Thirteenth International Conference on Learning Representations},
year={2025},
url={https://openreview.net/forum?id=l11DZY5Nxu}
}

@inproceedings{li2022causal,
  title={Causal inference-based root cause analysis for online service systems with intervention recognition},
  author={Li, Mingjie and Li, Zeyan and Yin, Kanglin and Nie, Xiaohui and Zhang, Wenchi and Sui, Kaixin and Pei, Dan},
  booktitle={Proceedings of the 28th ACM SIGKDD Conference on Knowledge Discovery and Data Mining},
  pages={3230--3240},
  year={2022}
}

@inproceedings{liu2021microhecl,
  title={Microhecl: High-efficient root cause localization in large-scale microservice systems},
  author={Liu, Dewei and He, Chuan and Peng, Xin and Lin, Fan and Zhang, Chenxi and Gong, Shengfang and Li, Ziang and Ou, Jiayu and Wu, Zheshun},
  booktitle={2021 IEEE/ACM 43rd International Conference on Software Engineering: Software Engineering in Practice (ICSE-SEIP)},
  pages={338--347},
  year={2021},
  organization={IEEE}
}

@article{gnecco2021causal,
  title={Causal discovery in heavy-tailed models},
  author={Gnecco, Nicola and Meinshausen, Nicolai and Peters, Jonas and Engelke, Sebastian},
  journal={The Annals of Statistics},
  volume={49},
  number={3},
  pages={1755--1778},
  year={2021},
  publisher={Institute of Mathematical Statistics}
}

@inproceedings{pham2024root,
  title={Root cause analysis for microservice system based on causal inference: How far are we?},
  author={Pham, Luan and Ha, Huong and Zhang, Hongyu},
  booktitle={Proceedings of the 39th IEEE/ACM International Conference on Automated Software Engineering},
  pages={706--715},
  year={2024}
}

@inproceedings{lin2018microscope,
  title={Microscope: Pinpoint performance issues with causal graphs in micro-service environments},
  author={Lin, JinJin and Chen, Pengfei and Zheng, Zibin},
  booktitle={International Conference on Service-Oriented Computing},
  pages={3--20},
  year={2018},
  organization={Springer}
}

@inproceedings{ikram2025root,
  title={Root cause analysis of failures from partial causal structures},
  author={Ikram, Azam and Lee, Kenneth and Agarwal, Shubham and Saini, Shiv Kumar and Bagchi, Saurabh and Kocaoglu, Murat},
  booktitle={The 41st Conference on Uncertainty in Artificial Intelligence},
  year={2025}
}

@article{halpern2005causes,
  title={Causes and explanations: A structural-model approach. Part II: Explanations},
  author={Halpern, Joseph Y and Pearl, Judea},
  journal={The British journal for the philosophy of science},
  year={2005},
  publisher={The University of Chicago Press}
}

@article{diaz2023nonparametric,
  title={Nonparametric causal effects based on longitudinal modified treatment policies},
  author={D{\'\i}az, Iv{\'a}n and Williams, Nicholas and Hoffman, Katherine L and Schenck, Edward J},
  journal={Journal of the American Statistical Association},
  volume={118},
  number={542},
  pages={846--857},
  year={2023},
  publisher={Taylor \& Francis}
}

@article{bengio2025superintelligent,
  title={Superintelligent Agents Pose Catastrophic Risks: Can Scientist AI Offer a Safer Path?},
  author={Bengio, Yoshua and Cohen, Michael and Fornasiere, Damiano and Ghosn, Joumana and Greiner, Pietro and MacDermott, Matt and Oberman, Adam and Richardson, Jesse and Richardson, Oliver and Rondeau, Marc-Antoine and others},
  journal={SuperIntelligence-Robotics-Safety \& Alignment},
  volume={2},
  number={5},
  year={2025}
}

@Book{causality_book,
author = {Peters, J. and Janzing, D. and Sch\"olkopf, B.},
ALTeditor = {•},
title = {Elements of Causal Inference -- Foundations and Learning Algorithms},
publisher = {MIT Press},
year = {2017},
OPTkey = {•},
OPTvolume = {•},
OPTnumber = {•},
OPTseries = {•},
OPTaddress = {Cambridge, MA, USA},
OPTedition = {•},
OPTmonth = {•},
OPTannote = {•}
}

@inproceedings{Oesterle2025,
  title={Beyond Single-Feature Importance with ICECREAM},
	author={Oesterle, Michael and Blöbaum, Patrick and Mastakouri, Atalanti and Kirschbaum, Elke},
	booktitle={Proceedings of Conference on Causal Learning and Reasoning (CLeaR)},
	pages={},
	year={2025},
	organization={}
}

@ARTICLE{Schoelkopf2021,
  author={B. {Schölkopf} and F. {Locatello} and S. {Bauer} and N. R. {Ke} and N. {Kalchbrenner} and A. {Goyal} and Y. {Bengio}},
  journal={Proceedings of the IEEE}, 
  title={Towards Causal Representation Learning}, 
  year={2021},
  volume={109},
  number={5},
  pages={1-23},
}

@article{tian2000probabilities,
  title={Probabilities of causation: Bounds and identification},
  author={Tian, Jin and Pearl, Judea},
  journal={Annals of Mathematics and Artificial Intelligence},
  volume={28},
  number={1},
  pages={287--313},
  year={2000},
  publisher={Springer}
}

@article{beckers2021causal,
  title={Causal sufficiency and actual causation},
  author={Beckers, Sander},
  journal={Journal of Philosophical Logic},
  volume={50},
  number={6},
  pages={1341--1374},
  year={2021},
  publisher={Springer}
}

@article{spirtes_scheines_2004, 
title={Causal Inference of Ambiguous Manipulations}, 
volume={71}, 
DOI={10.1086/425058}, 
number={5}, 
journal={Philosophy of Science}, 
publisher={Cambridge University Press}, 
author={Spirtes, Peter and Scheines, Richard}, 
year={2004}, 
pages={833–845}
}

@InProceedings{Zhu2024,  
title =  {Meaningful Causal Aggregation and Paradoxical Confounding},  
author =       {Zhu, Yuchen and Budhathoki, Kailash and K\"ubler, Jonas M. and Janzing, Dominik},  
booktitle =  {Proceedings of the Third Conference on Causal Learning and Reasoning},  
pages =  {1192--1217},  year =  {2024},  editor =  {Locatello, Francesco and Didelez, Vanessa},  
volume =  {236},  series =  {Proceedings of Machine Learning Research},  
month =  {01--03 Apr},  
publisher =    {PMLR},  url =  {https://proceedings.mlr.press/v236/zhu24a.html}, 
}

@InProceedings{Budhathoki2022,
  title = 	 {Causal structure-based root cause analysis of outliers},
  author =       {Budhathoki, Kailash and Minorics, Lenon and Bloebaum, Patrick and Janzing, Dominik},
  booktitle = 	 {Proceedings of the 39th International Conference on Machine Learning},
  pages = 	 {2357--2369},
  year = 	 {2022},
  editor = 	 {Chaudhuri, Kamalika and Jegelka, Stefanie and Song, Le and Szepesvari, Csaba and Niu, Gang and Sabato, Sivan},
  volume = 	 {162},
  series = 	 {Proceedings of Machine Learning Research},
  month = 	 {17--23 Jul},
  publisher =    {PMLR},
  urlopt = 	 {https://proceedings.mlr.press/v162/budhathoki22a.html}
}

@inproceedings{beckers2020approximate,
  title={Approximate causal abstractions},
  author={Beckers, Sander and Eberhardt, Frederick and Halpern, Joseph Y},
  booktitle={Uncertainty in artificial intelligence},
  pages={606--615},
  year={2020},
  organization={PMLR}
}

@inproceedings{beckers2019abstracting,
  title={Abstracting causal models},
  author={Beckers, Sander and Halpern, Joseph Y},
  booktitle={Proceedings of the aaai conference on artificial intelligence},
  volume={33},
  number={01},
  pages={2678--2685},
  year={2019}
}

@article{singal2024axiomatic,
  title={Axiomatic effect propagation in structural causal models},
  author={Singal, Raghav and Michailidis, George},
  journal={Journal of Machine Learning Research},
  volume={25},
  number={52},
  pages={1--71},
  year={2024}
}

@incollection{robins2022interventionist,
  title={An interventionist approach to mediation analysis},
  author={Robins, James M and Richardson, Thomas S and Shpitser, Ilya},
  booktitle={Probabilistic and causal inference: the works of Judea Pearl},
  pages={713--764},
  year={2022}
}

@inproceedings{
kawakami2025decomposition,
title={Decomposition of Probabilities of Causation with Two Mediators},
author={Yuta Kawakami and Jin Tian},
booktitle={The 41st Conference on Uncertainty in Artificial Intelligence},
year={2025},
url={https://openreview.net/forum?id=2y8svmcPmx}
}
\bibliographystyle{icml2026}

\newpage
\onecolumn
\appendix

\section{More Details on Explanation Scores \label{sec:exScores}}

\subsection{Cluster Explanation Score and Distribution Change \label{subsec:distrChange}}

\begin{lemma}[distribution change attribution]
Let 
$\delta_\one$ denote the point mass on the event $\bB=\one$. Then, for any $i \in K$, the cluster explanation score $\cE_{i \to k}$ can be written as 
the quotient of a conditional and a total relative entropy divergence
\begin{align}
\cE_{i\to K} &= 
\frac{D_{KL}\left[ \delta_\one (B_i \mid\bB_{\mathrm{Pa}(i)})\| P_\bB(B_i \mid\bB_{\mathrm{Pa}(i)})\right]}{
D_{KL}\left[ \delta_\one(\bB) \| P_\bB(\bB) \right]
},
\end{align}
and thus coincides with distribution change attribution \cite{budhathoki2021did}.
In other words, 
each $\cE_{i\to K}$ measures
the relative contribution of 
each mechanism to the distribution change from $P_\bB(\bB)$ to $\delta_\one(\bB)$. 
\end{lemma}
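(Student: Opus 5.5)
The identity follows by rewriting numerator and denominator of $\cE_{i\to K}$ (Definition~\ref{def:clusterScore} with $R=\{i\}$) as KL-divergences against point masses. By Definition~\ref{def:clusterScore},
\[
\cE_{i\to K} = \frac{\log P_\bB(B_i=1 \mid \bB_{\mathrm{Pa}(i)}=\one)}{\log P_\bB(\bB=\one)},
\]
so it suffices to identify numerator and denominator, up to a common sign, with the two divergences in the statement.

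\emph{Denominator.} Use the remark after Definition~\ref{def:clusterScore}: for any discrete $Q$, $D_{KL}(\delta_x\|Q)=\sum_y \delta_x(y)\log\frac{\delta_x(y)}{Q(y)} = -\log Q(x)$. The only nonzero term is $y=x$, and $0\log 0=0$ by convention. Applied to $Q=P_\bB$ and $x=\one$, this gives $D_{KL}[\delta_\one(\bB)\|P_\bB(\bB)] = -\log P_\bB(\bB=\one)$.

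\emph{Numerator.} Here I would interpret $\delta_\one(B_i\mid \bB_{\mathrm{Pa}(i)})$ as the conditional of the joint point mass $\delta_\one(\bB)$. I would use the standard conditional relative entropy, averaged over the parent configuration under the first argument:
\[
D_{KL}[Q(B_i\mid \bB_{\mathrm{Pa}(i)})\|P(B_i\mid \bB_{\mathrm{Pa}(i)})] := \sum_{\bb_{\mathrm{Pa}(i)}} Q(\bb_{\mathrm{Pa}(i)})\, D_{KL}[Q(B_i\mid \bb_{\mathrm{Pa}(i)})\|P(B_i\mid \bb_{\mathrm{Pa}(i)})],
\]
which is the convention in \citet{budhathoki2021did}. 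With $Q=\delta_\one$, the parent marginal is the point mass on $\bB_{\mathrm{Pa}(i)}=\one$. The conditional of $B_i$ given it is the point mass on $B_i=1$. Only one term survives, and it equals $-\log P_\bB(B_i=1\mid \bB_{\mathrm{Pa}(i)}=\one)$ by the same point-mass identity.

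Taking the quotient cancels the minus signs and yields the claimed formula.

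For the link to distribution change attribution, I would invoke the chain rule of KL-divergence for distributions Markov to $\cC$. Since $\delta_\one$ is trivially Markov to $\cC$,
\[
D_{KL}[\delta_\one(\bB)\|P_\bB(\bB)] = \sum_{j\in K} D_{KL}[\delta_\one(B_j\mid \bB_{\mathrm{Pa}(j)})\|P_\bB(B_j\mid \bB_{\mathrm{Pa}(j)})].
\]
This is exactly the mechanism-wise decomposition of \citet{budhathoki2021did}, and in the KL case it requires no Shapley averaging. Hence $\cE_{i\to K}$ is the share of mechanism $i$ in the total change. This also reproves \eqref{eq:aditive} of Lemma~\ref{lem:additivity}.

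The main obstacle is definitional rather than computational. One must fix the meaning of the conditional of a point mass and the weighting in the conditional KL so that undefined conditionals (parent configurations of $\delta_\one$-measure zero) receive zero weight. One also needs $P_\bB(\bB=\one)\in(0,1)$ so that both divergences are finite and the denominator is nonzero.
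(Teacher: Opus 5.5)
Your proposal is correct and matches the paper's proof. The denominator comes from the point-mass identity $D_{KL}(\delta_x\|Q)=-\log Q(x)$, and the numerator from the conditional divergence weighted by the $\delta_\one$-marginal of the parents, which collapses to $-\log P_\bB(B_i=1\mid \bB_{\mathrm{Pa}(i)}=\one)$ exactly as in the paper's one-line calculation. Your additional chain-rule argument is valid because both $\delta_\one$ and $P_\bB$ factorize along $\cC$, and it is a correct justification of the link to distribution change attribution that the paper only asserts.
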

The proof is a straightforward calculation using
\begin{align*}
& D_{KL}\left[ \delta_\one (B_i \mid\bB_{\mathrm{Pa}(i)})\| P_\bB(B_i \mid\bB_{\mathrm{Pa}(i)})\right] =
\sum_{\bb_{\mathrm{Pa}(i)}} 
D_{KL}\left[ \delta_\one (B_i \mid\bb_{\mathrm{Pa}(i)})\| P_\bB(B_i \mid\bb_{\mathrm{Pa}(i)})\right] \delta_\one(\bb_{\mathrm{Pa}(i)})\\
& = - \log P_\bB(B_i=1 \mid \bB_{\mathrm{Pa}(i)}=\one).
\end{align*}

\subsection{Explaining Clusters versus Pathways \label{Clusters vs pathways}}
\begin{example}[pathway versus cluster explanation score] \label{Cluster vs pathway score} Pathway explanation score measures whether the root causes are sufficient to explain the {\it target}, while cluster explanation score measures whether they sufficiently explain the {\it cluster}.
To see the difference, consider the pathway $X_1 \to X_2 \to X_3$, where $X_1$ is the unique root cause.
Assume $P(X_1=1)=10^{-15}$ and
$X_2=X_1$.
Further,
$P(X_3=1 \mid X_2=0)=10^{-3}$ and 
$P(X_3=1 \mid X_2=1)=10^{-5}$, that is, the event $X_1=1$ renders $X_3=1$ even {\it less} likely. The cluster explanation score reads $1- \frac{\log (10^{-5}) }{
\log (10^{-20})} =
1- 5/20= 3/4$, while the pathway explanation score is 
$1- \frac{\log (10^{-5})}{\log ((1- 10^{-15}) 10^{-3} + 10^{-15} 10^{-5})} \approx 1- 5/3 <0$. 
For the--very unlikely--joint event $(X_1=1,X_2=1, X_3=1)$ $X_1=1$ is a good explanation since it renders it significantly more likely. However, the target event $X_3=1$ is not that unlikely. Therefore, $X_1=1$ is a bad explanation, it renders the target even less likely.
    
\end{example}

\subsection{High Pairwise Explanation Score are Not Sufficient \label{subsec:paiwiseScore}}

\begin{example}[OR and AND gate with rare and less rare events]\label{ex:orAnd}
Given the pathway $B_1\to B_2\to B_3$ 
with 
\begin{eqnarray*}
B_1 &=& N_1,\\
B_2 &=& B_1 \wedge N_2,\\
B_3 &=& B_2 \vee N_3,
\end{eqnarray*}
where $N_i$ for $i=1,2,3$ are  unobserved binary variables with $p(n_1^1)=q_1$ and $p(n^1_2)=p(n^1_3)=q_2$.  
Assume $q_1 \ll q_2 = q_3 \ll 1$.
Then $B_1=1$ explains
$B_2=1$ with explanation 
score close to $1$ because
 $\log q_2 / (\log q_1 + \log q_2) \approx 0$.
 Further, $B_2=1$ explains $B_3=1$ with explanation score $1$.  
 Hence, both  non-root cause nodes $B_2$ and $B_3$ have explanation score close to $1$, but the  pathway explanation score of the root cause $B_1=1$ for the target $B_3=1$ 
 reads
\[
\cE^{K}_{1\to 3} = 1- \frac{\log P(B_3=1 \mid B_2=1)P(B_2=1 \mid B_1=1)}{\log P(B_3=1)} < 0.
\]
\end{example}

\section{Relation to Probabilities of Necessity and Sufficiency \label{sec:necessity}}

\begin{definition}[probability of sufficiency and probability of necessity]
Let $X,Y$ be binary variables with states 
$x^0,x^1$ and $y^0,y^1$,
respectively, and let $p$ be the joint probability distribution. The probability of sufficiency (PS) is defined as

\begin{equation}
    PS:=P(Y_{x^1}=y^1 \mid X=x^0,Y=y^0),
\end{equation}
that is, the probability that $Y=y^1$ would have happened had $X$  been set to $x^1$, conditional on observing $X=x^0,Y=y^0$. Likewise, the probability of necessity (PN)
is defined as
\begin{equation}
    PN:=P(Y_{x^0}=y^0 \mid X=x^1,Y=y^1),
\end{equation}

that is, the probability that $Y=y^1$ would not have happened had $X$ been set to $x^0$, conditional on observing $X=x^1,Y=y^1$.

\end{definition}

As shown by \cite{tian2000probabilities}, see Eq. (45), under monotonicity and unconfoundedness these quantities can be expressed as follows:
\begin{lemma}[PS and PN]
If the binary $X$ is an unconfounded cause of the binary $Y$ and the relation is monotonic
(i.e. $p(y^1 \mid x^1) \geq p(y^1 \mid x^0)$, then 
\begin{eqnarray}
PS &=& \frac{p(y^1 \mid x^1)- p(y^1)}{p(x^0,y^0)}\label{ps}, \\
PN &=& \frac{p(y^1 \mid x^1)- p(y^1 \mid x^0)}{p(y^1 \mid x^1)} \label{pn}.
\end{eqnarray}
\end{lemma}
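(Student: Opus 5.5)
The plan is to work in the counterfactual (structural) semantics underlying the definitions of PS and PN and to use three standard ingredients. First, \emph{consistency}: on the event $X=x$ we have $Y_x=Y$. Second, \emph{exogeneity}, which is how we read ``unconfounded'': the pair $(Y_{x^0},Y_{x^1})$ is independent of $X$, so that $P(Y_x=y)=p(y\mid x)$ and joint counterfactual probabilities do not change when we condition on $X$. Third, \emph{monotonicity} in the counterfactual sense of \citet{tian2000probabilities}: $Y_{x^0}=y^1$ implies $Y_{x^1}=y^1$, i.e.\ the event $\{Y_{x^0}=y^1,\,Y_{x^1}=y^0\}$ has probability zero. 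The statement only writes the weaker inequality $p(y^1\mid x^1)\geq p(y^1\mid x^0)$, so I will make explicit that the proof uses this counterfactual version. The inequality is merely its observable consequence under exogeneity and would not suffice on its own.

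The key intermediate quantity is the probability that $X$ makes a difference for $Y$, namely $P(Y_{x^1}=y^1,Y_{x^0}=y^0)$. By monotonicity, $\{Y_{x^0}=y^1\}\subseteq\{Y_{x^1}=y^1\}$ up to a null set. Hence
\[
P(Y_{x^1}=y^1,Y_{x^0}=y^0)=P(Y_{x^1}=y^1)-P(Y_{x^0}=y^1)=p(y^1\mid x^1)-p(y^1\mid x^0),
\]
where the last step uses exogeneity. By exogeneity again, the same value holds conditionally on $X=x^0$ or on $X=x^1$.

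For PN, I write it as $P(Y_{x^0}=y^0,X=x^1,Y=y^1)/p(x^1,y^1)$. Consistency replaces $Y=y^1$ by $Y_{x^1}=y^1$ on $\{X=x^1\}$. Exogeneity then factors the numerator as $p(x^1)\bigl(p(y^1\mid x^1)-p(y^1\mid x^0)\bigr)$. Dividing by $p(x^1)p(y^1\mid x^1)$ gives \eqref{pn}.

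For PS, the same steps on $\{X=x^0\}$ give numerator $p(x^0)\bigl(p(y^1\mid x^1)-p(y^1\mid x^0)\bigr)$ over the denominator $p(x^0,y^0)$. It then remains to show the identity $p(x^0)p(y^1\mid x^1)-p(x^0,y^1)=p(y^1\mid x^1)-p(y^1)$. For this I write $p(x^0)=1-p(x^1)$ and use $p(y^1)=p(x^1,y^1)+p(x^0,y^1)$. This yields \eqref{ps}.

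The only delicate step is the first one: stating precisely which monotonicity and unconfoundedness assumptions are used, since the inequality as written does not imply the counterfactual monotonicity. The rest is routine algebra.
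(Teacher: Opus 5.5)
Your proof is correct. It takes a different route from the paper, which gives no argument of its own and simply cites Eq.~(45) of \citet{tian2000probabilities}. Tian and Pearl's identification result under monotonicity is $PN=[p(y^1)-P(Y_{x^0}=y^1)]/p(x^1,y^1)$ and $PS=[P(Y_{x^1}=y^1)-p(y^1)]/p(x^0,y^0)$. These become the stated formulas once unconfoundedness replaces the interventional terms by $p(y^1\mid x^0)$ and $p(y^1\mid x^1)$.

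You instead work directly from consistency, exogeneity and counterfactual monotonicity. You route both quantities through the probability of necessity and sufficiency, $P(Y_{x^1}=y^1,Y_{x^0}=y^0)=p(y^1\mid x^1)-p(y^1\mid x^0)$. Your factorizations and the closing identity $p(x^0)p(y^1\mid x^1)-p(x^0,y^1)=p(y^1\mid x^1)-p(y^1)$ both check out.

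Your route gives a self-contained argument, and it exposes something the lemma's parenthetical hides. The observational inequality $p(y^1\mid x^1)\ge p(y^1\mid x^0)$ does not identify PN or PS: under unconfoundedness it only yields bounds, and the stated formulas are the lower bounds. So the lemma holds only with monotonicity read in the no-prevention sense that the cited source intends, as you correctly insist.

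The citation route, in turn, needs slightly less than you use. In Tian and Pearl's derivation, $p(y^1)-P(Y_{x^0}=y^1)=P(Y_{x^0}=y^0,X=x^1,Y=y^1)$ follows from consistency and monotonicity alone. Hence only the marginal condition $P(Y_x=y)=p(y\mid x)$ is required, not the joint independence of $(Y_{x^0},Y_{x^1})$ from $X$ that your factorization step invokes. Under the structural reading of ``unconfounded'' both conditions hold, so this is not a gap. You should, however, state the positivity assumptions $p(x^1,y^1)>0$ and $p(x^0,y^0)>0$, without which PN and PS are undefined.
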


Despite the impression that explanation score 
is rather related to sufficiency than necessity, we observe that PS can be rather small even when the explanation score is close to $1$. Assume, for instance, that  $p(y^1)=10^{-12}$ and $p(y^1 \mid x^1)=10^{-2}$. Since the denominator in \eqref{ps} is close to $1$, PS is roughly $1/100$, while explanation score is $5/6$.  
In contrast, when the target is rare and the explanation score is positive enough, the following lemma provides a lower bound showing that PN is close to $1$:
\begin{lemma}[PN and explanation score]
\label{lem:PNandE}
Let the influence of $X$ on $Y$  be monotonic and unconfounded. 
Then the probability of necessity satisfies
\begin{align}
PN \geq 1- p(y^1)^{\cE(x^1\to y^1)}.
\end{align}
\end{lemma}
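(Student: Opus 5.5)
The plan is to start from the Tian--Pearl expression for PN in the preceding lemma and bound the subtracted term using the explanation score. Since $X$ is an unconfounded cause of $Y$, interventional and observational conditionals coincide: $p(y^1 \mid do(x^1)) = p(y^1\mid x^1)$. The definition of the explanation score therefore gives
\[
\log p(y^1\mid x^1) = (1-\cE(x^1\to y^1))\,\log p(y^1),
\]
that is,
\[
p(y^1\mid x^1) = p(y^1)^{1-\cE(x^1\to y^1)} = \frac{p(y^1)}{p(y^1)^{\cE(x^1\to y^1)}}.
\]

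By \eqref{pn}, $PN = 1 - p(y^1\mid x^0)/p(y^1\mid x^1)$, so it suffices to show
\[
\frac{p(y^1\mid x^0)}{p(y^1\mid x^1)} \le p(y^1)^{\cE(x^1\to y^1)}.
\]
Substituting the expression above, this is equivalent to $p(y^1\mid x^0)\le p(y^1)$. That inequality follows from monotonicity. Indeed, $p(y^1) = p(x^0)\,p(y^1\mid x^0)+p(x^1)\,p(y^1\mid x^1)$ is a convex combination of the two conditionals. Since $p(y^1\mid x^1)\ge p(y^1\mid x^0)$, the combination is at least the smaller conditional, so $p(y^1)\ge p(y^1\mid x^0)$. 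Chaining the inequalities gives $PN\ge 1-p(y^1)^{\cE(x^1\to y^1)}$.

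There is no real obstacle here; the only care needed is with degenerate cases. If $p(y^1)=1$, the score is undefined because its denominator $\log p(y^1)$ vanishes, so we exclude it. If $p(y^1\mid x^1)=0$, then monotonicity forces $p(y^1)=0$; both situations are excluded implicitly by the definition of $\cE$. We also need $0<p(x^1)<1$ so that the conditionals are defined. The remaining step to state explicitly is the identification $p(y^1\mid do(x^1))=p(y^1\mid x^1)$ from unconfoundedness.
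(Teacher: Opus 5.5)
Your proposal is correct and follows the paper's proof essentially verbatim. Like the paper, you start from the Tian--Pearl formula $PN = 1 - p(y^1\mid x^0)/p(y^1\mid x^1)$, bound $p(y^1\mid x^0)\le p(y^1)$, and use unconfoundedness to identify $p(y^1)/p(y^1\mid x^1)=p(y^1)^{\cE(x^1\to y^1)}$; your convex-combination argument for $p(y^1\mid x^0)\le p(y^1)$ usefully spells out a step the paper leaves implicit.
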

\begin{proof}
Using (45) in \cite{tian2000probabilities} 
\begin{align*}
PN & = \frac{p(y^1 \mid x^1)- p(y^1 \mid x^0)}{p(y^1 \mid x^1)} = 1-  \frac{p(y^1 \mid x^0)}{p(y^1 \mid x^1)} \geq 1- \frac{p(y^1)}{p(y^1 \mid x^1)} = 1- p(y^1)^{\cE(x^1\to y^1)},
\end{align*}
where the last step follows from 
\[
\cE( x^1 \to y^1) =
1- \frac{\log p(y^1 \mid x^1)}{\log p(y^1)},
\]
which holds due to 
$p(y^1 \mid x^1)= p(y^1 \mid do(x^1))$.
\end{proof}
To illustrate this result with numbers, assume that we observe an event with outlier score $S(y^1):=- \log p(y^1)=5$, which corresponds to an observation that is unique 
among $e^5 \approx 148$ observations. 
For $\cE (x^1\to y^1)=1/5$ we obtain 
$PN\geq 1- e^{-1} \approx 0.63$, meaning that 
it is more likely for $X=x^1$ to be necessary than not necessary.  

Let us now argue why PS is of minor relevance for our explanations. We are interested in explaining an observed event in which both $x^1,y^1$ occurred, while PS concerns cases in which $X=x^0$ and $Y=y^0$ occurred, and asks whether setting $X$ to $x^1$ would have produced $Y=y^1$. Thus, PS addresses a different question from the event-specific explanations considered here.  This
aligns with an argument of \citet{beckers2021causal} on page 7, that "... an explicit sufficiency condition is not required"
when focusing on cases where both cause and effect happened.  
To understand why PN is so close to $1$ even  when explanation scores are only slightly above $0$, note that $Y=y^1$ is unlikely and thus every sufficient cause is unlikely. Thus, although other sufficient causes may exist, they are unlikely to have occurred in the particular occurrence of $Y=y^1$ where $X=x^1$ is also observed. This implies that, in the rare-event setting, a reasonably strong explanatory cause provides evidence for necessity.     

It is therefore reasonable to search for causes with high explanation score, because such causes are likely to be necessary for the target event and therefore suitable intervention points for preventing the event, especially when the target is undesirable, such as a performance issue in a technical system. For a pathway with nodes
$B_1,\dots,B_k$ where
$B_t$ is the target, we can make this intervention interpretation precise by considering an augmented graph $\cP^I$ 
containing an
intervention node $I$ with states $i^0,i^1$. The state $I=i^1$ triggers the intervention $do(\bB_R=\one)$ 
for $I=i^1$ with $P(i^1)=:\epsilon$. Let $P^\epsilon(I,\bB)$ denote the joint distribution in the augmented DAG. 
Using Lemma \ref{lem:PNandE}
with $y^1$ being the event $B_t=1$ and $x^1$ the intervention $i^1$,
we  conclude that the probability of necessity of $I=i^1$ for $B_t=1$ satisfies
\begin{equation}\label{eq:PN}
PN \geq 1- P^\epsilon(B_t=1)^{\cE_{R \to t}}.
\end{equation}
For $\epsilon\to 0$, $P^\epsilon (\bB)$ converges to $P(\bB)$ and $P^\epsilon(B_t=1)$
can be replaced with $P(B_t=1)$.

\section{Why We Use Interventional Probabilities Only \label{sec:AppRung2}}

One may ask why our explanations use interventional probabilities only --rung 2 in the ladder of causation \cite{Pearl2018}-- while
reasoning about single events may suggest using counterfactuals (rung 3). To this end, we consider the toy scenario with two binary variables encoding rare events 
$X,Y$ with $Y= X \wedge N$, where $N$ is binary noise with $P(N=1)\ll P(X=1)$. While the counterfactual statement 
``$Y=1$ would not have occurred if $X$ had been set to $0$
(for that particular statistical unit)'' suggests $X$ as crucial for explaining $Y=1$, it is of minor relevance according to the quantitative measures used here because the counterfactual relevance 
 generalizes to only a tiny fraction of statistical units. However, for the vast majority of units, $X$ has no effect on $Y$, since $N=0$ is overwhelmingly more likely than $N=1$. While we do not in general deny 
the importance of counterfactuals for understanding singular events, our goal of stating explanations that are easily   
falsifiable and generalize to a significant fraction of statistical units lets us prefer referring to interventional probabilities only.

\section{Comparison to Other Methods for Explaining Rare Events}

To provide a concrete illustration with mean-based attribution, consider the logical AND gate $Y=X \wedge N$ with $P(X=1)=10^{-9}$ and $P(N=1)=10^{-1}$, so that $P(Y=1)=10^{-10}$. With our logarithmic scaling (following explanation scores), the contribution of $X$ reads $1 - \frac{1}{10}=90\%$, in other words its contribution is strong because it increases the probability of the output event by a large factor. By contrast, a mean-based contribution such as $\frac{\mathbb{E}[Y \mid do(X=1)]-\mathbb{E}[Y]}{1-\mathbb{E}[Y]}$ is approximately $10^{-1}$, which is  much smaller. If we instead consider Shapley-based contributions, the contribution of $X$ is

\[\frac{\mathbb{E}[Y \mid X=1] - \mathbb{E}[Y] + \mathbb{E}[Y \mid X=1,N=1]- \mathbb{E}[Y \mid N=1]}{2(1-\mathbb{E}[Y])} \approx 0.55\]

This gives a stronger contribution than the plain mean-based score, but it still remains substantially below the logarithmic score of $0.9$, since of the two marginal terms are still tied to the absolute change in the mean. Since we follow arguments in \cite{Oesterle2025,Budhathoki2022} stating that a substantially more rare factor should get significantly more attention, we prefer the logarithmic contribution analysis. Another issue with Shapley-based contributions is that they require knowing the structural causal model and are not invariant under redefining the noise: if we replace $N$ by two variables $N_1, N_2$, with $Y = X \wedge N_1 \wedge N_2$ and $P(N_1=1, N_2=1)=P(N=1)=10^{-1}$, then the Shapley contribution of $X$ changes.

\section{When to Expect Multiple Root Causes \label{sec:multipleRC}}

Let us consider a simple causal DAG where $Y$ 
is an additive effect of multiple 
independent causes:
$Y:=\sum_{i=1}^d X_i$,
with $X_i$ independently identically distributed with density $p_X$. 
It is known from a large number of asymptotical results, see e.g. \cite{Armendariz2011}, that the shape 
of $p_X$ determines whether extremes of $Y$ 
tend to originate from 
extreme values of one of the $X_i$ or multiples of them. It is easy to see why Gaussian $X_i$ asymptotically result in {\it all} $X_i$ being extreme:
since the joint density is rotation invariant in ${\mathbb R}^d$, we can choose $Y$ as one axis, and all directions orthogonal to $Y$ are independent. Accordingly, conditioning on $Y$ being large does not affect the joint distribution of the subspace orthogonal to $Y$. Hence, there are most likely no extreme values in any normalized linear combination of $X_i$ that 
is orthogonal to $Y$.
Therefore {\it all} $X_i$ must attain large values. 
On the other hand, 
\citet{Armendariz2011}
show that 
for $p_X$ with subexponential density, extreme values of $Y$ originate from extrema of a single $X_i$ in the limit of large $y$. 
All the above arguments are based on the assumption that the outlier is a result of post-selection, rather than distribution shift. 

If we, instead, assume that 
our event is a result of 
distribution shift, which is caused by shifts of conditional distributions at multiple nodes,
we tend to ask for a common root cause behind distributional shifts happening {\it simultaneously} at actually ``independent mechanisms'', cf \citet{causality_book}, Sections 2.1 and 2.2.

\section{Proof of Theorem \ref{thm:convStat} \label{proof_4_3}}


Let $1,\dots,n$ be w.l.o.g. causally ordered according to $\cG$.
Define the random variable 
\[
(x_j, \bx_{\mathrm{Pa}(j)}) \mapsto 
L_j(x_j,\bx_{\mathrm{Pa}(j)}):=-\log P_\bX \big(\tau_j(X_j)\geq \tau_j(x_j) \mid  \tau_{\mathrm{Pa}(j)}(\bX_{\mathrm{Pa}(j)}) \geq \tau_{\mathrm{Pa}(j)}(\bx_{\mathrm{Pa}(j)})\big). 
\]
Lemma \ref{lem:kernelfeaturem} entails for all $j\not \in R$, given 
an arbitrary observation $\bx_{\{1,\dots,j-1\}}$, the random variable $L_j$ attains values above $\alpha$ with probability at most $e^{-\alpha}$.
Let $(T_j)_{j\in N \setminus R}$ be 
independent exponentially distributed random variables with density $p(t)=e^{-t}$. Then for any $\bx_{\{1,\dots,j-1\}}$, the conditional distribution of  $L_j$ given
$\bx_{\{1,\dots,j-1\}}$,
is stochastically dominated by $T_j$. Since $(L_i)_{i\in N\setminus R , i\leq j-1}$ is a function of 
$\bx_{\{1,\dots,j-1\}}$, we also conclude that $L_j$ is stochastically dominated by $T_j$, given any values of $(L_i)_{i\in N\setminus R , i\leq j-1}$.
Hence, $L=\sum_{j\in N\setminus R} L_j$ is stochastically dominated by
$\sum_{j\in N\setminus R} T_j$. 

The sum of $n'$ independent identically distributed exponential distributions with parameter $\lambda$ is the Erlang distribution  ${\rm Erl}(\lambda,n')$ \citep{IBE20131}, with CDF
\[
F(x)= 1- e^{-\lambda x} \sum_{i=0}^{n'-1} \frac{(\lambda x)^i}{i!}.
\]
Since $\lambda=1$ because
we have density $p(t)=e^{-t}$ and $n'=n-|R|$, we conclude
that 
$T$ attains values greater than or equal to some  $c>0$ with probability at most
\begin{equation}\label{eq:convStat}
p= \sum_{i=0}^{n - |R| -1} \frac{c^i}{i!} e^{-c},
\end{equation}
which applies to $L$ too because it is stochastically dominated by $T$.

\section{Additional Results on Pathway Abstractions}

\subsection{Further Analysis of Example \ref{ex:3Gaussians} \label{sec:AppAbstractions}}

We first consider the simpler pathway abstraction $B_2 \to B_3$, where both the pathway and the cluster are given by the DAG $B_2 \to B_3$, and we set $P_{\bB}(B_2,B_3):=P(B_2,B_3)$. The accuracy of the bivariate abstraction is

\[r_\mathrm{bi}=1 - \frac{\max \{D_{KL}(\mathrm{Bern}(p_0) \,\|\,\mathrm{Bern}(q_0)), D_{KL}(\mathrm{Bern(p_1) \,\|\,\mathrm{Bern(q_1))}}\}}{-\log P(Z \geq z)},\]

where $q_0$ and $q_1$ are computed by

\begin{align} \label{eq:q_0}
q_0:=P_\bB(B_3=1 \mid do(B_2=0))=   
P_\bB(B_3=1 \mid B_2=0)=
P(Z\geq z \mid Y < y),
\end{align}

\begin{align} \label{eq:q_1}
q_1:=P_\bB(B_3=1 \mid do(B_2=1))=
P_\bB(B_3=1 \mid B_2=1)=
P(Z\geq z \mid Y\geq y), 
\end{align}

while $p_0$ and $p_1$ are obtained from the fine-grained interventional probability distributions under natural micro-realization given by

\begin{align} \label{eq:p_0} 
p_0:=P(B_3=1 \mid do(B_2=0))=
\int^y_{-\infty}
P(Z\geq z \mid do(Y=y')) p(y' \mid Y < y)dy', 
\end{align}

\begin{align} \label{eq:p_1}
p_1:= P(B_3=1 \mid do(B_2=1))=
\int_y^\infty
P(Z\geq z \mid do(Y=y')) p(y' \mid Y\geq y)dy'. 
\end{align}

Since $Y$ and $Z$ are linear functions of the jointly  Gaussian variables $X, N_Y, N_Z$, the pair $(Y,Z)$ is bivariate Gaussian  with covariance

\[\Sigma_{Y,Z}=\begin{pmatrix}
    1 & \gamma + \alpha \beta\\
    \gamma + \alpha \beta & 1
\end{pmatrix},\]

and therefore the probabilities in (\ref{eq:q_0}) and (\ref{eq:q_1}) can be computed from the bivariate Gaussian CDF of $(Y, Z)$. To compute the interventional probability $P(Z \geq z \mid do(Y=y'))$,  consider the structural equation model

\begin{align} \label{eq:scm}
Z^*= \beta X + \gamma Y^* + N_Z,
\end{align}

where $X$ and $N_z$ are as in the original structural model in (\ref{eq:seConf}), and $Y^* \sim {\cal N}(0,1)$ with $Y^* \independent X, N_z$. 

Then, conditional on $Y^*=y'$, we have

\[Z^*=\beta X + \gamma y' + N_Z.\]

Hence,
\[P(Z \geq z \mid do(Y=y'))=P(Z^* \geq z  \mid Y^*=y').\]

Therefore,

\begin{align} \label{eq:b_2_1}
P(Z \geq z \mid do(B_2=1)) = \int_y^\infty P(Z^* \geq z \mid Y^*= y')p(y' \mid Y^* \geq y') dy= P(Z^* \geq z  \mid Y^* \geq y').
\end{align}

Similarly we obtain

\begin{align} \label{eq:b_2_0}
P(Z \geq z \mid do(B_2=0))= P(Z^* \geq z \mid Y^* < y').
\end{align}

Note that $(Y^*, Z^*)$ is a bivariate Gaussian variable with covariance

\[\Sigma_{Y^*,Z^*}=\begin{pmatrix}
    1 & \gamma\\
    \gamma & \beta^2 + \gamma^2 + \sigma_{N_z}^2
\end{pmatrix},\]

since $\mathrm{Cov} (Y^*,Z^*)= \gamma$ and $\mathrm{Var}(Z^*)=\beta^2 + \gamma^2 + \sigma_{_z}^2$. Thus, the right-hand sides of (\ref{eq:b_2_1}) and (\ref{eq:b_2_0}) can be calculated using the bivariate Gaussian CDF of $(Y^*, Z^*)$.

The pathway explanation score of the bivariate pathway abstraction is given by

\[\cE_{2 \to 3}^{\{2,3\}}=1-\frac{\log P(Z \geq z \mid Y \geq y)}{\log P(Z \geq z)}.\]

Now we consider the pathway abstraction $B_1 \to B_3 \leftarrow B_2$, with the full DAG as the cluster and $P_\bB(\bB):=P(\bB)$. For the trivariate abstraction, the accuracy is

\begin{align*}
    r_{tri}= 1 - \frac{\max_{S \subseteq \{1,2,3\}}\max_{\bb_S}D_{KL}(P(\bB \mid do(\bB_S=\bb_S)) \,\|\, P_\bB(\bB \mid do(\bB_S=\bb_S)))}{-\log P(Z \geq z)}.
\end{align*}

Since $P_\bB$ is Markov relative to the cluster, we have the factorization

\begin{align*}
    p_\bB(b_1, b_2, b_3)=p_\bB(b_1)p_\bB(b_2 \mid b_1)p_\bB(b_3 \mid b_1,b_2).
\end{align*}

Thus, $P_\bB(\bB \mid do(\bB_s=\bb_s))$ is obtained by fixing the variables in $S$ and multipying only the mechanisms of the non-intervened variables.

We now describe the fine-grained interventional distributions $P(\bB \mid do(\bB_s=\bb_s))$. Interventions that do not involve $B_2$ can be simply computed using the observational Gaussian distribution of  $(X, Y, Z)$. For example, under $do(B_2=1)$, the fine-grained value of $X$ is drawn from $P(X \mid \lvert X \rvert \leq x)$, and $Y$ and $Z$ are generated by the original structural equation model. Hence,

\begin{align*}
    P(B_1=1, B_2=1, B_3=1 \mid do(B_1=1)) = P(Y \geq y, Z \geq z \mid \lvert X \rvert \leq x).
\end{align*}

\begin{figure}
\centering
\includegraphics[width=0.81\textwidth]{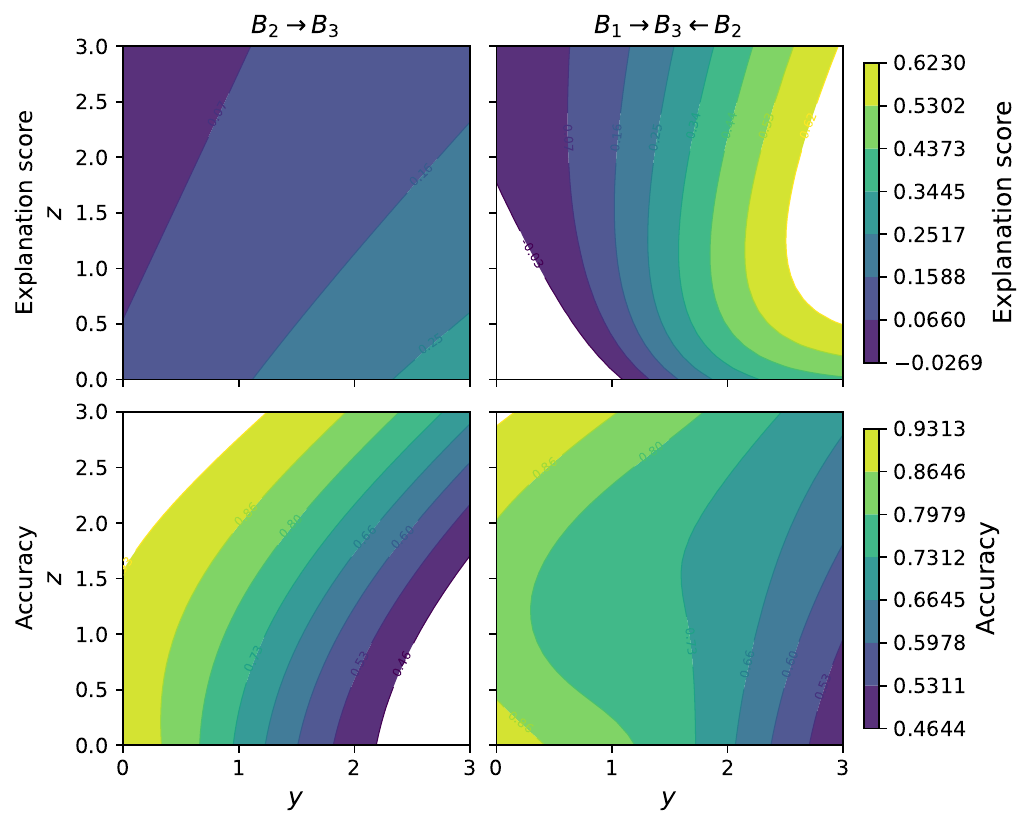} 

\caption{\label{fig:trivariate} Explanation scores and abstraction accuracies for Example \ref{ex:3Gaussians}, comparing the bivariate abstraction $B_2 \to B_3$ with the  trivariate abstraction $B_1 \to B_3 \leftarrow B_2$. In all figures $x=1$, and parameters are $\alpha = -0.9$, $\beta = 0.9$, and $\gamma = 0.9$. The trivariate abstraction achieves higher explanation scores and accuracies in the region $y \geq 2$, $z \geq 2$ because $B_1$ captures contextual information that is lost in the bivariate abstraction.}


\end{figure}

When $B_2$ is intervened on, we again use structural equation model in (\ref{eq:scm}). Then the covariance matrix of the multivariate Gaussian variable $(X, Y^*, Z^*)$ is given by

\[\Sigma_{X, Y^*, Z^*}=
\begin{pmatrix}
1 & 0 & \beta\\
0 & 1 & \gamma\\
\beta & \gamma & \beta^2+\gamma^2+\sigma_{N_Z}^2
\end{pmatrix},
\]

The fine-grained interventional probabilities can therefore be calculated using Gaussian  probabilities of the multivariate Gaussian variable $(X, Y^*, Z^*)$. For example, under $do(B_2=1)$,

\[P(B_1=1, B_2=1, B_3=1 \mid do(B_2=1))=\frac{P{( \lvert X \rvert \leq x, Y^* \geq y, Z^* \geq z)}}{P(Y^* \geq y)},\]

and under $do(B_1=1, B_2=1)$ we have

\begin{align*}
    P(B_1=1, B_2=1, B_3=1 \mid do(B_1=1, B_2=1))=\frac{P( \lvert X \rvert \leq x, Y^* \geq y, Z^* \geq z)}{P(\lvert X \rvert \leq x) P(Y^* \geq y)}.
\end{align*}

Figure \ref{fig:trivariate} compares the accuracy and explanation score of the bivariate abstraction $B_2 \to B_3$ with the trivariate abstraction $B_1 \to B_3 \leftarrow B_2$ in Example \ref{ex:3Gaussians} for $x=1$ and different values of $y,z$. The parameter choice creates negative confounding: because $\alpha < 0$, large $Y$ tends to occur with smaller values of $X$, while because $\beta >0$, smaller values of $X$ tend to decrease $Z$. The trivariate abstraction includes $B_1$, which captures this context, and therefore achieves higher explanation scores and accuracies in the outlier region $y\geq 2$, $z \geq 2$.

The code used for the numerical evaluation in Example \ref{ex:3Gaussians} and for generating the plots in Figure \ref{fig:trivariate} is available at \url{https://github.com/AH-20/CausalPathwayExplanation}.

\subsection{Pathway Abstraction of DAGs With Binary Variables \label{sec:binvar}}

Consider the causal DAG given by Figure \ref{fig:xyz_a}, where $X,Y,Z$ are binary variables, and the events $X=i$, $Y=j$, $Z=k$ are denoted by $x^i,y^j,z^k$, respectively, for $i,j,k \in \{0,1\}$. Note that the DAG in Figure \ref{fig:xyz_b} is always a valid pathway abstraction with accuracy equal to $1$ by merging the direct and indirect links into one arrow.
In this section we show that the accuracy of the pathway  abstraction in Figure \ref{fig:xyz_c} depends on the relevance of the direct link $X\to Z$ and we derive  a quantitative condition 
for safely ignoring the confounder $X$ to obtain the pathway abstraction in Figure \ref{fig:xyz_d}. 

\begin{figure}
\centering

\begin{subfigure}[b]{0.22\textwidth}
\centering
\begin{tikzpicture}[     scale=.5, 
transform shape,
  >=Stealth,
  node distance=2.2cm and 2.8cm, 
  every node/.style={circle, draw, thick, minimum size=1cm, font=\sffamily\bfseries}
]

  \node (X) {X};
  \node (Y) [below=of X] {Y};
  \node (Z) [right=of Y] {Z};

  \draw[->] (X) -- (Y);
  \draw[->] (Y) -- (Z);
  \draw[->] (X) to[bend left=20] (Z);

\end{tikzpicture}
\caption{}
\label{fig:xyz_a}
\end{subfigure}
\hfill
\begin{subfigure}[b]{0.22\textwidth}
\centering
\begin{tikzpicture}[     scale=.5, 
transform shape,
  >=Stealth,
  node distance=2.2cm and 2.8cm, 
  every node/.style={circle, draw, thick, minimum size=1cm, font=\sffamily\bfseries}
]

  \node (X) {X};
  \node (Y) [below=of X, opacity=0] {Y};
  \node (Z) [right=of Y] {Z};

  \draw[->] (X) to[bend left=20] (Z);

\end{tikzpicture}
\caption{}
\label{fig:xyz_b}
\end{subfigure}
\hfill
\begin{subfigure}[b]{0.22\textwidth}
\centering
\begin{tikzpicture}[     scale=.5, 
transform shape,
  >=Stealth,
  node distance=2.2cm and 2.8cm, 
  every node/.style={circle, draw, thick, minimum size=1cm, font=\sffamily\bfseries}
]

  \node (X) {X};
  \node (Y)  [below=of X] {Y};
  \node (Z) [right=of Y] {Z};

  \draw[->] (X) -- (Y);
  \draw[->] (Y) -- (Z);
  
\end{tikzpicture}
\caption{}
\label{fig:xyz_c}
\end{subfigure}
\hfill
\begin{subfigure}[b]{0.22\textwidth}
 \centering   
\begin{tikzpicture}[     scale=.5, 
transform shape,
  >=Stealth,
  node distance=2.2cm and 2.8cm, 
  every node/.style={circle, draw, thick, minimum size=1cm, font=\sffamily\bfseries}
]

  \node (X) [opacity = 0] {X};
  \node (Y)  [below=of X] {Y};
  \node (Z) [right=of Y] {Z};

  \draw[->] (Y) -- (Z);

\end{tikzpicture}
\caption{}
\label{fig:xyz_d}
\end{subfigure}
\caption{\label{fig:xyz} (a) Causal DAG with binary variables $X$, $Y$, and $Z$, where $X$ influences the target $Z$ directly and indirectly via the mediator $Y$. (b): Pathway abstraction with accuracy $1$, which is always possible because the indirect and direct links can always be merged into one direct link. (c): Pathway abstraction whose accuracy $r$ depends on the relevance of the direct link $X \to Y$ for the outlier event at $Z$ (d): Pathway abstraction whose accuracy $r$ depends on the relevance of the confounder $X$ for the outlier event at $Z$.}
\end{figure}

First, in close analogy to \citet{Budhathoki2022}, we define a notion of ``causal contribution'' for a cause effect-pair $X \to Y$:

\begin{definition}[causal contribution] For $i, j \in \{0, 1\}$, the causal contribution of $x^i$ to $y^j$ is

\begin{align}
    C \left(x^i \to y^j \right):=\log \frac{p(y^j \mid x^i)}{p(y^j)}.
\end{align}

\end{definition}
Note that it 
coincides with explanation score
up to a normalization factor:
\[
\cE \left(x^i \to y^j \right) = - \frac{C \left(x^i\to y^j \right)}{\log p(y^j)}.
\]

When there are mediator nodes between the cause-effect pair in addition to the direct link, the above definition does not correctly capture the ``direct'' contribution of the cause node to the effect node. Therefore, we introduce a notion of direct contribution 
that follows our
principle of logarithmic contribution analysis,
although it is motivated by the average controlled direct effect in Definition 3 in \cite{Pearl_indirect}.

\begin{definition}[controlled direct contribution]\label{def:controlledirect} For $i, j, k \in \{0,1\}$, the controlled direct contribution of $x^i$ to $z^k$ where $Y$ is set to $y^j$ is 
\begin{equation}\label{eq:controlleddirect}
C^{c}_{\rm direct} \left(x^i\to z^k \mid y^j \right) := \log \frac{p(z^k \mid do(x^i,y^j))}{p(z^k \mid do(y^j))}. 
\end{equation}
\end{definition}

We are now able to describe a condition for which we can safely ignore the confounder: 
\begin{lemma}[ignoring confounders]
\label{lem:ignConf}
Let $X,Y,Z$ be binary variables connected by the DAG $\mathcal{G}$ in Figure \ref{fig:xyz_a}, with joint probability distribution $P$. Consider the pathway abstraction in Figure \ref{fig:xyz_d}, with root cause $Y$ and target $Z$. We choose the cluster DAG to coincide with this pathway DAG, and set  $B:= \{Y,Z\}$, $p_\bB(y^j,z^k):=p(y^j,z^k)$.
Then this pathway abstraction has accuracy at least
\begin{align}
r \geq  1 - \frac{s_{\rm confounding}}{-\log p(z^1)},
\end{align}
where $s_{\rm confounding}$
is a parameter measuring the strength of confounding by $X$, which is defined by 
\begin{align}
s_{\rm confounding}& := \max_{j, k} \min \Big\{ \max_i
  \left|C \left(x^i\to y^j \right)\right|, \left|C^c_{\rm direct} \left(x^1\to z^k \mid y^j \right) - 
C^c_{\rm direct} \left(x^0\to z^k \mid y^j \right) \right| \Big\}. 
\end{align}
\end{lemma}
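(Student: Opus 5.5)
The plan is to split the accuracy into its interventional cases and bound each KL-divergence by a worst-case log-likelihood ratio. The abstraction here has only $\bB=\{Y,Z\}$ with $p_\bB(y^j,z^k)=p(y^j,z^k)$, and $Y,Z$ are already binary. The binarization is therefore the identity, and the natural micro-realization of Definition \ref{def:micror} reduces to the ordinary hard intervention. We must bound $D_{KL}[P(\bB\mid do(\bB_S=\bb_S))\,\|\,P_\bB(\bB\mid do(\bB_S=\bb_S))]$ for every $S\subseteq\{Y,Z\}$.

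For $S=\emptyset$ both distributions equal $p(Y,Z)$, so the divergence vanishes. For $S=\{Z\}$, $Z$ is a sink in both $\mathcal G$ and the abstraction. Intervening on it leaves $Y\sim p(Y)$ in both, so the divergence is again $0$. For $S=\{Y,Z\}$ both sides are the same point mass. The only nontrivial case is $S=\{Y\}$, $\bb_S=y^j$. There the true distribution of $Z$ is $P(z^k\mid do(y^j))=\sum_i p(x^i)\,q_i$, while the abstraction gives $p(z^k\mid y^j)=\sum_i p(x^i\mid y^j)\,q_i$. Here I write $q_i:=p(z^k\mid x^i,y^j)$, which equals $p(z^k\mid do(x^i,y^j))$ because $X,Y$ are the only parents of $Z$.

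Next I use the elementary bound
\[
D_{KL}(P\|Q)=\mathbb{E}_P\log\frac{P}{Q}\le \max_k\Big|\log\frac{P(z^k)}{Q(z^k)}\Big|,
\]
so it suffices to show, for each fixed $j,k$, that $|\log P(z^k\mid do(y^j))-\log p(z^k\mid y^j)|$ is at most each of the two quantities inside the $\min$ in $s_{\rm confounding}$. Taking the max over $j,k$ then gives $s_{\rm confounding}$, and dividing by $-\log p(z^1)$ yields the claim.

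\emph{First bound (weak $X\to Y$ link).} Bayes gives $p(x^i\mid y^j)=p(x^i)\,e^{C(x^i\to y^j)}$. Hence the two weight vectors differ pointwise by factors in $[e^{-a},e^{a}]$ with $a=\max_i|C(x^i\to y^j)|$. Two nonnegative combinations of the same $q_i$ then have ratio in $[e^{-a},e^{a}]$.

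\emph{Second bound (weak direct effect heterogeneity).} Both quantities are convex combinations of $q_0,q_1$, so both lie in $[\min_i q_i,\max_i q_i]$. Their log-ratio is therefore at most $|\log(q_1/q_0)|$. Since the denominators $p(z^k\mid do(y^j))$ cancel, $\log(q_1/q_0)=C^c_{\rm direct}(x^1\to z^k\mid y^j)-C^c_{\rm direct}(x^0\to z^k\mid y^j)$ by Definition \ref{def:controlledirect}.

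The main point needing care is the reduction from KL to the pointwise log-ratio, together with the observation that the two bounds hold simultaneously for every $(j,k)$. This simultaneity is what allows the $\min$ inside and the $\max_{j,k}$ outside. Both steps are elementary, so I expect no real obstacle beyond getting the case analysis over $S$ complete.
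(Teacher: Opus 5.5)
Your proposal is correct and follows essentially the same route as the paper's proof. Both arguments reduce to the single nontrivial intervention $do(y^j)$ and bound the KL divergence by the largest absolute log-ratio $\bigl|\log \frac{p(z^k\mid y^j)}{p(z^k\mid do(y^j))}\bigr|$; both then bound this ratio of two mixtures of $p(z^k\mid x^i,y^j)$ once by the spread of the components (the controlled-direct-contribution term) and once by the weight ratios $p(x^i\mid y^j)/p(x^i)=e^{C(x^i\to y^j)}$ (the $X\to Y$ term), where your multiplicative $e^{\pm a}$ phrasing is just the paper's ``ratio of sums lies between the extreme termwise ratios'' inequality.
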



\begin{proof} Since $Y$ and $Z$ are already binary, it follows from definition \ref{def:micror} that the natural micro-realization coincides with ordinary binary interventions on the abstract variables. Moreover, since $P_\bB(Y,Z)=P(Y,Z)$, the observational distribution is matched by construction. Interventions on $Z$, or on both $Y$ and $Z$, also agree in the original and abstract models because $Z$ is fixed by the intervention and the marginal distribution of $Y$ is unchanged. Therefore, to find the accuracy of the pathway abstraction, we only need to compute the following:

\begin{align} \label{eq:kld}
r & = 1- \max_{j} \Big\{ 
\frac{D_{KL} \left(P (Z \mid do (y^j)) \, \| \,P_\bB(Z \mid do (y^j))\right)}{-\log p(z^1)} \Big\} = 1 - \max_j \Bigg\{ \frac{\sum_k p(z^k \mid do(y^j))\log \frac{p(z^k \mid do(y^j))}{p_\bB(z^k \mid do(y^j))}}{-\log p(z^1)} \Bigg \}.
\end{align}

We can write

\begin{align}\label{eq:confIgn}
\log \frac{p_\bB(z^k \mid do(y^j))}{p(z^k \mid do(y^j))}
&= \log \frac{p(z^k \mid y^j)}{p(z^k \mid do(y^j))}  =
\log \frac{\sum_i p(z^k \mid x^i,y^j) p(x^i \mid y^j)}{\sum_i p(z^k \mid x^i,y^j) p(x^i)}. 
\end{align}
Since the numerator and denominator of \eqref{eq:confIgn} are two different convex sums of 
$p(z^k \mid x^1,y^j)$ and 
$p(z^k \mid x^0,y^j)$, we have
\begin{align} \label{eq:directinequality}
\left| \log \frac{p_\bB(z^k \mid do(y^j))}{p(z^k \mid do(y^j))}
\right|  \leq  \left| \log \frac{p(z^k \mid x^1,y^j)}{p(z^k \mid x^0,y^j)} \right| & = \left| \log \frac{p(z^k \mid x^1,y^j)}{p(z^k \mid do(y^j))} - \log \frac{p(z^k \mid x^0,y^j)}{p(z^k \mid do(y^j))} \right| \nonumber\\ 
&= \left| C^c_{\rm direct} \left(x^1\to z^k \mid y^j \right) - C^c_{\rm direct}\left(x^0\to z^k \mid y^j \right) \right|.
\end{align}
Further, we have 
\begin{align*}
\log & \frac{\sum_i p \left( z^k \mid x^i,y^j \right) p(x^i \mid y^j)}{\sum_i p(z^k \mid x^i,y^j) p(x^i)} \in  \left[
\min_i \log \frac{p(x^i \mid y^j)}{p(x^i)}, 
\max_i \log \frac{p(x^i \mid y^j)}{p(x^i)}
\right],\nonumber
\end{align*}
because 
\[
\frac{\sum_m \lambda_m a_m}
{\sum_m \lambda_m b_m}
\in \left[\min_m \frac{a_m}{b_m}, \max_m \frac{a_m}{b_m}
\right],
\]
for any positive sequences $\lambda_m, a_m, b_m$.

With $p(x^i \mid y^j)/p(x^i)= p(y^j \mid x^i)/p(y^j)$ we have thus shown that the absolute value of \eqref{eq:confIgn} is bounded by 
\begin{align} \label{eq:directcontribution}
\max_i \left\{ \left|\log \frac{p(y^j \mid x^i)}{p(y^j)} \right| \right\} = \max_i \left\{ \left|C \left(x^i\to y^j \right) \right| \right\}. 
\end{align}

Since the KL-divergence in \eqref{eq:kld} is a convex sum of the log-ratios in \eqref{eq:confIgn}, from (\ref{eq:directinequality}) and (\ref{eq:directcontribution}) we have

\begin{align}
    D_{KL} & \left(P (Z \mid do (y^j)) \, \| \,P_\bB(Z \mid do (y^j))\right) \nonumber \\
    & \leq \max_{j,k} \min\Big\{ \max_i
  \left|C\left(x^i\to y^j \right)\right|, \left|C^c_{\rm direct}\left(x^1\to z^k \mid y^j \right) - 
C^c_{\rm direct}(x^0\to z^k \mid y^j) \right| \Big\},
\end{align}

which completes the proof.

\end{proof}

Lemma \ref{lem:ignConf} shows that $X$ can be ignored when the confounding strength is small. This happens, for example, if 
$X$ has little contribution to $Y$, or if the controlled  direct contribution of $X$ to $Z$ changes little across the values of $X$ once $Y$ is fixed. 


Finally, we are able to describe a condition for which we can safely ignore the direct link:

\begin{lemma} [ignoring direct link] \label{lem:ignoredirect}
Let $X,Y,Z$ be binary variables connected by the DAG $\mathcal{G}$ in Figure \ref{fig:xyz_a} with joint probability distribution $P$. Consider the pathway abstraction in Figure \ref{fig:xyz_c}, with root cause $X$ and target $Z$. We choose the cluster DAG to coincide with this pathway DAG and set $\bB:=\{X, Y, Z\}$, $p_{\bB}
(x^i,y^j,z^k) := p(x^i,y^j) p(z^k \mid y^j)$. Let 
\begin{align}
    M:= \max_{i,j,k}\left|C^c_{\mathrm{direct}}\left(x^i \to z^k \mid y^j \right) \right|,
\end{align}
then, the pathway abstraction has accuracy at least


\begin{align} \label{eq:ingoredirectlink}
    r \geq 1 - \frac{ 2M}{-\log p(z^1)}.
\end{align}

\end{lemma}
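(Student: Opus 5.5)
The plan is to use the fact that the abstract model and the fine-grained model share every mechanism except the one for $Z$. Then I will bound the log-ratio between the two $Z$-mechanisms pointwise by $2M$.

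\textbf{Step 1: reduce to ordinary interventions.} As in the proof of Lemma \ref{lem:ignConf}, all variables are already binary, so by Definition \ref{def:micror} the natural micro-realization of $do(\bB_S=\bb_S)$ coincides with the ordinary intervention in $\mathcal{G}$. Hence $P(x,y,z\mid do(\bb_S))$ is obtained by truncated factorization of $p(x)p(y\mid x)p(z\mid x,y)$. Likewise $P_\bB(x,y,z\mid do(\bb_S))$ is obtained by truncating $p(x)p(y\mid x)p(z\mid y)$, since $p_\bB(x,y)=p(x,y)$ and $p_\bB(z\mid x,y)=p(z\mid y)$. The two factorizations share the factors for $X$ and $Y$.

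\textbf{Step 2: reduce the KL-divergence to the $Z$-mechanism.} I will split into cases according to whether $Z\in S$.
\begin{itemize}
\item If $Z\in S$, both interventional distributions consist of identical surviving factors, so the KL-divergence vanishes.
\item If $Z\notin S$, the marginals of $(X,Y)$ agree in the two models. The chain rule for KL-divergence then gives
\begin{align*}
D_{KL}\big[P(\bB\mid do(\bb_S))\,\|\,P_\bB(\bB\mid do(\bb_S))\big]
=\mathbb{E}_{(x,y)}\Big[\sum_k p(z^k\mid x,y)\log\frac{p(z^k\mid x,y)}{p(z^k\mid y)}\Big],
\end{align*}
where the expectation is over the common interventional law of $(X,Y)$.
\end{itemize}
This is a convex combination of the log-ratios $\log\frac{p(z^k\mid x^i,y^j)}{p(z^k\mid y^j)}$. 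It therefore suffices to show that each such ratio has absolute value at most $2M$.

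\textbf{Step 3: pointwise bound.} Since $X$ is a root node and $Y$'s only parent is $X$, we have $p(z\mid do(x^i,y^j))=p(z\mid x^i,y^j)$ and $p(z\mid do(y^j))=\sum_i p(z\mid x^i,y^j)p(x^i)$. Writing $C_i:=C^c_{\rm direct}(x^i\to z^k\mid y^j)$, I decompose
\begin{align*}
\log\frac{p(z^k\mid x^i,y^j)}{p(z^k\mid y^j)} = C_i-\log\frac{p(z^k\mid y^j)}{p(z^k\mid do(y^j))}.
\end{align*}
The second term can be rewritten using $p(z^k\mid y^j)=\sum_{i'}p(x^{i'}\mid y^j)\,p(z^k\mid x^{i'},y^j)$:
\begin{align*}
\frac{p(z^k\mid y^j)}{p(z^k\mid do(y^j))}=\sum_{i'}p(x^{i'}\mid y^j)\,e^{C_{i'}}.
\end{align*}
This is a convex combination of the values $e^{C_{i'}}$, so its logarithm lies in $[\min_{i'}C_{i'},\max_{i'}C_{i'}]\subseteq[-M,M]$. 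The same mean-value argument was used in Lemma \ref{lem:ignConf}. Combined with $|C_i|\le M$, the triangle inequality gives the bound $2M$.

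\textbf{Step 4: conclude.} Steps 2 and 3 give a KL-divergence of at most $2M$ for every $S$ and $\bb_S$, including $S=\emptyset$. Inserting this into the definition of the accuracy \eqref{eq:accuracy}, with $P_\bX(B_t=1)=p(z^1)$, yields \eqref{eq:ingoredirectlink}.

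The main point needing care is Step 2: verifying, case by case over $S$, that the $(X,Y)$-marginals genuinely coincide under each intervention, so that the chain rule leaves only the $Z$-term. The rest is the convexity bound already used for Lemma \ref{lem:ignConf}.
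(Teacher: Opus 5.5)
Your proposal is correct and follows essentially the same route as the paper. Like the paper, you reduce to ordinary interventions, note that the KL-divergence vanishes when $Z$ is intervened on and is otherwise an average of the log-ratios $\log\frac{p(z^k\mid x^i,y^j)}{p(z^k\mid y^j)}$, and bound each ratio by $2M$ because $p(z^k\mid y^j)$ is a convex combination of the $p(z^k\mid x^{i'},y^j)$. The only difference is that the paper phrases your Step 3 multiplicatively, sandwiching both $p(z^k\mid x^i,y^j)$ and $p(z^k\mid y^j)$ between $e^{-M}p(z^k\mid do(y^j))$ and $e^{M}p(z^k\mid do(y^j))$, which is equivalent to your log-form decomposition with the triangle inequality.
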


\begin{proof}
 Since $X, Y, Z$ are already binary, the natural micro-realization coincides with ordinary binary interventions. By definition of the pathway abstraction, $p_\bB(x^i,y^j,z^k)=p(x^i,y^j)p(z^k|y^j)$, so the mechanisms for $X$ and $Y$ coincide with those of the original model. Therefore, if $Z$ is intervened on, the KL-divergence is zero, so we only need to consider intervention sets $\bB_S$ with $Z \notin \bB_S$, which yields

 \begin{align}
    D_{KL}\left( P(\bB \mid do(\bB_S=\bb_S))  \, \middle\| \, P_\bB(\bB \mid do(\bB_S=\bb_s)) \right) = \sum_\bb P(\bb \mid do(\bB_S= \bb_S)) \log \frac{p(z^k|x^i,y^j)}{p(z^k|y^j)}
 \end{align}
 
 where the sum is over configurations $\bb=(x^i, y^j, z^k)$ consistent with the intervention. Therefore, we need to bound

 \begin{align*}
     \left|\log \frac{p(z^k|x^i,y^j)}{p(z^k|y^j)}\right|.
\end{align*}

Note that since $p(z^k|do(x^i,y^j))=p(z^k|x^i,y^j)$, we have

\begin{align*}
    \left| \log \frac{p(z^k \mid x^i,y^j)}{p(z^k \mid do(y^j))} \right| \leq M, 
\end{align*}

therefore

\begin{align} \label{eq:b_1}
    e^{-M}p(z^k \mid do(y^j)) \leq p(z^k\mid x^i,y^j) \leq e^M p(z^k \mid do(y^j)).
\end{align}

Since

\begin{align*} 
    p(z^k \mid y^j) = \sum_i p(z^k \mid x^i,y^j) p(x^i \mid y^j),
\end{align*}

is a convex sum of the probabilities $p(z^k \mid x^i,y^j)$, from (\ref{eq:b_1}) we have

\begin{align} \label{eq:b_2}
    e^{-M} p(z^k \mid do(y^j)) \leq p(z^k \mid y^j) \leq e^M p(z^k \mid do(y^j)),
\end{align}

then from (\ref{eq:b_1}) and (\ref{eq:b_2}) it follows that

\begin{align*}
    e^{-2M} \leq \frac{p(z^k \mid x^i, y^j)}{p(z^k \mid y^j)} \leq e^{2M}.
\end{align*}

Therefore

\begin{align*}
    \left| \log \frac{p(z^k \mid x^i, y^j)}{p(z^k \mid y^j)} \right| \leq 2M.
\end{align*}

and thus

\begin{align*}
    \max_{S, \bb_S} D_{KL} \left(P(\bB \mid do(\bB_S=\bb_S)) \, \middle\| \, P_\bB(\bB \mid do(\bB_S = \bb_S)) \right) \leq 2M,
\end{align*}

which completes the proof. 

\end{proof}

Lemma \ref{lem:ignoredirect} shows that the direct edge $X \to Y$ can be ignored when the controlled direct contribution of $X$ to $Y$ is uniformly small across all values of $X, Y, Z$.

\section{Example of a Causal Pathway Generated by LLM  \label{sec:LLM}}

Generated by Anthropic Claude Opus 4.5, with 32,000 output tokens, accessed via Amazon Bedrock. 
The boxes contain answers in a format that is close to the original output. 

\subsection{Generation of an Example with Pathway \label{sec:H1}} 


\begin{tcolorbox}[
  breakable,
  colback=blue!3,
  colframe=blue!35!black,
  colbacktitle=blue!20,
  coltitle=black,
  title={System prompt: describe causal pathway},
  fonttitle=\bfseries
]
\paragraph{User input:}
Describe a causal pathway that explains how an imaginary inhabitant of the USA got long-term homeless.
The pathway should be a causal DAG where each node is a condition describing an objective fact. 
Every condition (that is, node of the DAG) is supposed to be a likely consequence of its parents, 
that is, they are explained by the parent conditions. 
The description of the parent conditions need to be detailed enough such that the consequence is likely given the parent conditions, without referring to remote ancestors.
Only the 
root causes are one or more nodes (not necessarily the root nodes of the DAG) describe conditions which need not be explained in terms of their parents. Please be explicit about the nodes and the arrows of the DAG and try to restrict the explanation to a DAG with not more than 5 nodes. 

\vspace{0.5em}

\paragraph{LLM output:}
(Readers may realize that 
the description of the pathway below is logically inconsistent with the list of arrows further below. We nevertheless considered this pathway in Section \ref{sec:falsification} because it provided an instructive example of an incomplete explanation. Since the scope of this paper is not the evaluation of LLMs, we believe it is justified to pick examples that are particularly suitable for our purpose.) 

\vspace{0.5em}

\# Causal Pathway to Long-Term Homelessness

\#\# DAG Structure

```

[A] Severe Mental Illness (Schizophrenia) diagnosed at age 19
         
  \hspace{1cm}       $\downarrow$
         
[B] Unable to maintain employment due to untreated psychotic episodes

   \hspace{1cm}        $\downarrow$
         
[C] Exhausted savings and lost apartment after 8 months without income

  \hspace{1cm}        $\downarrow$
              
[D] Family estrangement (parents deceased, sibling relationship severed during psychotic episode)

  \hspace{1cm}     $\downarrow$
         
[E] Long-term homelessness ($>$1 year on streets)
```

\vspace{1cm}

\#\# Nodes


\begin{tabularx}{\linewidth}{l X}
\textbf{Node} & \textbf{Condition} \\
\hline
\textbf{A} & 35-year-old male developed schizophrenia at age 19; lacks consistent access to psychiatric medication due to gaps in Medicaid coverage. \\

\textbf{B} & Fired from three consecutive jobs over 2 years due to paranoid behavior and absences during acute episodes. \\

\textbf{C} & With no income and \$2{,}000 in savings depleted on food and medication, evicted from rental apartment. \\

\textbf{D} & No family support available—parents deceased; only sibling cut contact after a threatening incident during a psychotic break. \\

\textbf{E} & Living on the streets for more than 18 months; chronic homelessness now established. \\
\end{tabularx}

\vspace{1cm}

\#\# Root Cause

**Node A** (Severe Mental Illness) is the **root cause**—a condition that initiates the cascade but is not explained by other nodes in this pathway.

\#\# Arrows

- A $\to$ B\\
- B $\to$ C\\
- C $\to$ E\\
- D $\to$ E (D combines with C to make E likely)

\end{tcolorbox}

\subsection{Inferring Probability $P(B|A)$}

\begin{tcolorbox}[
  breakable,
  colback=blue!3,
  colframe=blue!35!black,
  colbacktitle=blue!20,
  coltitle=black,
  title={System prompt: estimate a probability},
  fonttitle=\bfseries
]

\paragraph{User input:} what's the probability that condition A results in condition B for an inhabitant in the US? $|$ A $|$ 35-year-old male developed schizophrenia at age 19; lacks consistent access to psychiatric medication due to gaps in Medicaid coverage $|$
$|$ B $|$ Fired from three consecutive jobs over 2 years due to paranoid behavior and absences during acute episodes $|$

\vspace{0.5em}

\paragraph{LLM output:}

Model: \# Probability Estimate: Condition A  $\to $ Condition B

\#\# My Estimate: **45-65\%** (moderate-to-high probability)

---

\#\# Key Factors Increasing Probability

\#\#\# Schizophrenia + Medication Non-Adherence\\
- **Relapse rates** without consistent medication: 60-80\% within 1 year\\
- Acute episodes typically involve positive symptoms (paranoia, disorganization) that are highly visible in workplace settings

\#\#\# Employment Vulnerability\\
- Schizophrenia baseline unemployment rate: $\sim$70-80\%\\
- Among those who *do* work, job tenure is often short\\
- Paranoid subtype symptoms are particularly problematic for workplace relationships

\#\#\# Medicaid Gap Impact\\
- Coverage disruptions strongly correlate with psychiatric hospitalization and crisis episodes\\
- 16 years of illness duration suggests multiple cycles of decompensation

---

\#\# Factors That Could Lower Probability

\begin{tabular}{ll}
$|$ Factor & $|$ Effect $|$ \\
\hline
Strong family support system & May buffer crises \\
Milder symptom severity & Some maintain function even unmedicated \\
Supportive employment programs & If accessed, improve retention \\
Not all jobs terminate \textit{for cause} & Some separations may be voluntary \\
\end{tabular}

---

\#\# Why Not Higher ($>65\%$)?

- Not all people with poorly-controlled schizophrenia hold three jobs in 2 years (many hold zero)\\
- The specific pattern of "three firings" requires repeated re-hiring, which suggests *some* baseline functionality\\

---

**Point estimate: $\sim 55\%$**

\end{tcolorbox}


\subsection{Inferring probability $P(C|B)$}

\begin{tcolorbox}[
  breakable,
  parbox = false, 
  colback=blue!3,
  colframe=blue!35!black,
  colbacktitle=blue!20,
  coltitle=black,
  title={System prompt: estimate a probability},
  fonttitle=\bfseries
]
\paragraph{User input:} What's the probability that [B] Unable to maintain employment due to untreated psychotic episodes results in [C] Exhausted savings and lost apartment after 8 months without income for an inhabitant of the USA?

\vspace{0.5em}

\paragraph{LLM output:}

Probability Estimate: B $\to$ C

Scenario Analysis

Given: A person in the USA who is unable to maintain employment due to untreated psychotic episodes

Question: Probability of exhausting savings and losing apartment after 8 months without income

Key Factors

Factors Increasing Probability:
\begin{itemize}
    \item Limited savings baseline: \textasciitilde56\% of Americans cannot cover a \$1,000 emergency; median savings for renters is very low
    \item 8 months is substantial: Average U.S. rent is \textasciitilde\$1,500-2,000/month = \$12,000-16,000 needed just for housing
    \item Untreated psychosis context: Suggests limited access to support systems, possible prior instability
    \item Eviction timelines: Most states allow eviction proceedings after 1-3 months of non-payment
\end{itemize}

Factors Decreasing Probability:
\begin{itemize}
    \item Disability benefits: SSI/SSDI applications (though these take 6+ months typically)
    \item Family support: Some may have family assistance
    \item Emergency rental assistance programs
    \item Savings variation: Some individuals do have substantial savings
\end{itemize}

Estimate

Given:
\begin{itemize}
    \item Most Americans lack 8 months of expense coverage
    \item Untreated psychosis population likely has \textit{below-average} financial reserves
    \item Eviction processes typically complete within 8 months of non-payment
\end{itemize}

Probability: 0.75 - 0.85

Point estimate: $\sim 0.80$
\end{tcolorbox}

\subsection{Inferring Probability $P(D|C)$}

\begin{tcolorbox}[
  breakable,
  parbox=false,
  colback=blue!3,
  colframe=blue!35!black,
  colbacktitle=blue!20,
  coltitle=black,
  title={System prompt: estimate a probability},
  fonttitle=\bfseries
]


\paragraph{User input:}What's the probability that [C] Exhausted savings and lost apartment after 8 months without income results in [D] Family estrangement (parents deceased, sibling relationship severed during psychotic episode) for an inhabitant of the USA?

\vspace{0.5em}

\paragraph{LLM output:} I need to estimate the probability that exhausting savings and losing an apartment after 8 months without income leads to family estrangement (with parents deceased and sibling relationship severed during a psychotic episode).

Key considerations:

\begin{enumerate}
    \item Baseline context: Someone who has gone 8 months without income and lost housing is likely experiencing severe crisis - possibly involving mental health issues, substance abuse, or other destabilizing factors.
    
    \item The psychotic episode element: This is specific - it implies a serious mental health condition. Psychotic episodes can involve behavior that severely damages relationships (paranoid accusations, erratic actions, inability to recognize loved ones' intentions).
    
    \item Conditional factors:
    \begin{itemize}
        \item The housing loss suggests someone who didn't have family support to fall back on - this may indicate pre-existing strain
        \item 8 months without income suggests possible disability, mental health crisis, or other severe impairment
        \item Sibling relationships severed ``during'' a psychotic episode suggests the break happens in crisis, not gradually
    \end{itemize}
    
    \item However: Most people who lose housing don't have psychotic episodes, and most psychotic episodes don't permanently sever all family ties. This is a specific tragic pathway.
\end{enumerate}

Estimation:
\begin{itemize}
    \item The scenario C already selects for vulnerable individuals
    \item But complete family estrangement with this specific pattern remains relatively uncommon
    \item Partial estrangement is more common than total severance
\end{itemize}

Probability estimate: 3-7\%

Point estimate: \textasciitilde0.05 (5\%)

\end{tcolorbox}

\subsection{Inferring Probability $P(E|D)$}

\begin{tcolorbox}[
  breakable,
  colback=blue!3,
  colframe=blue!35!black,
  colbacktitle=blue!20,
  coltitle=black,
  title={System prompt:}
estimate a probability,
  fonttitle=\bfseries
]
\paragraph{User input:} What's the probability that [D] Family estrangement (parents deceased, sibling relationship severed during psychotic episode)
        results in 
[E] Long-term homelessness ($>$1 year on streets) for an inhabitant of the USA?

\vspace{0.5em}

\paragraph{LLM Output:}
 Estimating P(Long-term Homelessness $|$ Family Estrangement with Psychotic Episode)

Key Factors to Consider

Population context:
- Person has experienced a psychotic episode (suggesting serious mental illness)
- Complete loss of family safety net (parents deceased, sibling estrangement)
- Located in USA

Breaking Down the Estimate

Baseline rates:
- $\sim$30\% of homeless individuals have serious mental illness
- $\sim$50-60\% of long-term homeless cite family conflict as contributing factor
- Average duration of homelessness is longer for those with SMI

Protective vs. Risk factors in this scenario:

Risk Factors: Psychotic episode history; No family support; Relationship severed during episode

Potential Mitigators: Access to mental health systems; Disability benefits (SSI/SSDI); Possible reconnection after stabilization; Other social supports possible

Probability Estimate

For someone with this specific profile (SMI + complete family estrangement):

Point estimate: 15-25\%

Reasoning:
- Much higher than general population ($\sim$0.2\% long-term homeless)
- SMI alone increases risk $\sim$10-20$\times$
- Loss of family support roughly doubles risk again
- But majority still avoid long-term homelessness through benefits, treatment, other relationships

Central estimate: $\sim$20\%

This reflects serious but not deterministic risk.

\end{tcolorbox}




\end{document}